\documentclass[letterpaper, 11pt]{article}

\usepackage[english]{babel}
\usepackage[utf8x]{inputenc}
\usepackage[T1]{fontenc}
\usepackage{chngcntr}
\usepackage{apptools}
\AtAppendix{\counterwithin{lemma}{section}}

\usepackage{geometry}
\usepackage{graphicx}
\usepackage{subfigure}
\usepackage{booktabs} 
\usepackage{amsfonts}       
\usepackage{authblk}

\usepackage{xcolor}
\usepackage{amsmath}
\usepackage{empheq}
\usepackage{amssymb}
\usepackage{amsthm}
\usepackage{comment}
\usepackage{bbm}
\usepackage{algorithm}
\usepackage[noend]{algpseudocode}
\newcommand{\E}{\mathbb{E}}
\newtheorem{theorem}{Theorem}
\newtheorem{lemma}{Lemma}
\newtheorem{claim}{Claim}
\newtheorem{fact}{Fact}
\newtheorem{corollary}{Corollary}

\newtheorem{definition}{Definition}

\newtheorem{assumption}{Assumption}
\usepackage{graphicx}
\newcommand{\D}{\mathcal{D}}
\newcommand{\calC}{\mathcal{C}}

\newcommand{\vx}{\textbf{x}}
\newcommand{\vw}{\textbf{w}}
\newcommand{\ve}{\textbf{e}}
\newcommand{\vz}{\textbf{z}}
\newcommand{\vv}{\textbf{v}}

\newcommand{\rmI}{\textbf{I}}
\newcommand{\relu}{\mathsf{ReLU}}
\newcommand{\sgn}{\mathsf{sign}}
\newcommand{\err}{\mathsf{err}}
\renewcommand{\hat}{\widehat}
\newcommand{\opt}{\mathsf{opt}}

\DeclareMathOperator*{\argmin}{arg\,min}
\allowdisplaybreaks

\title{Time/Accuracy Tradeoffs for Learning a ReLU with respect to
  Gaussian Marginals}

\author[]{Surbhi Goel\footnote{surbhi@cs.utexas.edu}}
\author[]{Sushrut Karmalkar\footnote{sushrutk@cs.utexas.edu}}
\author[]{Adam Klivans\footnote{klivans@cs.utexas.edu}}
\affil[]{Department of Computer Science, University of Texas at Austin}
\date{}
\begin{document}

\maketitle
\begin{abstract}
  We consider the problem of computing the best-fitting ReLU with
  respect to square-loss on a training set when the examples have been
  drawn according to a spherical Gaussian distribution (the labels can
  be arbitrary).  Let $\opt < 1$ be the population loss of the
  best-fitting ReLU.  We prove:

\begin{itemize}

\item Finding a ReLU with square-loss $\opt + \epsilon$ is as
  hard as the problem of learning sparse parities with noise, widely thought
  to be computationally intractable.  This is the first hardness
  result for learning a ReLU with respect to Gaussian marginals, and
  our results imply --unconditionally-- that gradient descent cannot
  converge to the global minimum in polynomial time.

\item There exists an efficient approximation algorithm for finding the
  best-fitting ReLU that achieves error $O(\opt^{2/3})$.  The
  algorithm uses a novel reduction to noisy halfspace learning with
  respect to $0/1$ loss. 

\end{itemize}
Prior work due to Soltanolkotabi \cite{soltanolkotabi2017learning} showed that gradient descent {\em can} find the best-fitting ReLU with respect to Gaussian marginals, if the training set is {\em exactly} labeled by a ReLU.

\end{abstract}

\section{Introduction}
A Rectified Linear Unit (ReLU) is a function parameterized by a weight vector $\vw \in \mathbb{R}^d$ that maps $\mathbb{R}^d \rightarrow \mathbb{R}$ as follows: $\relu_{\vw}(\vx) = \max(0, \vw \cdot \vx)$.  ReLUs are now the nonlinearity of choice in modern deep networks.  The computational complexity of learning simple neural networks that use the ReLU activation is an intensely studied area, and many positive results rely on assuming that the marginal distribution on the examples is a spherical Gaussian \cite{journals/corr/abs-1806-07808, journals/corr/abs-1711-00501, zhong2017recovery, journals/corr/abs-1810-06793}.  Recent work due to Soltanolkotabi \cite{soltanolkotabi2017learning} shows that gradient descent will learn a single ReLU in polynomial time, if the marginal distribution is Gaussian (see also \cite{brutzkus2017globally}).  His result, however, requires that the training set is {\em noiseless}; i.e., there is a ReLU that correctly classifies all elements of the training set.  

Here we consider the more realistic scenario of empirical risk minimization or learning a ReLU with noise (often referred to as {\em agnostically} learning a ReLU).  We assume that a learner has access to a training set from a joint distribution ${\cal D}$ on $\mathbb{R}^d \times \mathbb{R}$ where the marginal distribution on $\mathbb{R}^d$ is Gaussian but the distribution on the labels can be arbitrary within $[0,1]$.  We define $\opt = \min_{w, \|w\| \leq 1} \E_{x,y \sim {\cal D}}[(\relu_{w}(x) - y)^2]$, and the goal is to output a function of the form $\max(0, \vw \cdot \vx)$ with square-loss at most $\opt + \epsilon$. 

\subsection{Our Results}
Our main results give a trade-off between the accuracy of the output hypothesis and the running time of the algorithm.  We give the first evidence that there is no polynomial-time algorithm for finding a ReLU with error $\opt + \epsilon$, even when the marginal distribution is Gaussian:

\begin{theorem}[Informal version of Theorem \ref{thm:main-thm-lower-bound}] \label{thm:mainlower} Assuming hardness of the problem of learning sparse parities with noise, any algorithm for finding a ReLU on data drawn from a distribution with Gaussian marginals that has error at most $\opt + \epsilon$ runs in time $d^{\Omega(\log(1/\epsilon))}$.  \end{theorem}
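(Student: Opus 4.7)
The natural route is a reduction from the problem of learning $k$-sparse parities with noise (SLPN) on the uniform distribution over $\{-1,1\}^d$, which is widely believed to require $d^{\Omega(k)}$ time. Choosing $k=\Theta(\log(1/\epsilon))$ converts this into the claimed $d^{\Omega(\log(1/\epsilon))}$ lower bound for agnostic ReLU learning.

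The first step is to transform a Boolean SLPN instance into a distribution $\D$ on $\mathbb{R}^d\times[0,1]$ with Gaussian marginals. A clean device is a ``sign-preserving'' embedding: given an SLPN sample $(z,y)$ with $z\in\{-1,1\}^d$ uniform and $y$ a noisy copy of $\chi_S(z)=\prod_{i\in S}z_i$, draw i.i.d.\ $g_1,\dots,g_d\sim\N(0,1)$ and set $x_i = z_i|g_i|$. Then $x\sim\N(0,I_d)$ exactly while $\sgn(x_i)=z_i$, so that the parity becomes $\prod_{i\in S}\sgn(x_i)$, a function of the Gaussian sample. The noisy SLPN label is then rescaled to serve as a target $Y\in[0,1]$.

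The heart of the argument, and the step I expect to be the main obstacle, is establishing a \emph{gap lemma}: some distinguished ReLU tied to the support $S$ achieves the minimum square-loss $\opt$, while every ReLU whose weight vector fails to concentrate on $S$ has square-loss at least $\opt+\epsilon$ for $\epsilon=2^{-\Theta(k)}$. I would prove this by Hermite-expanding both a candidate $\relu_w$ and the parity-derived target under $\N(0,I_d)$. The target's Hermite mass lives on monomials indexed by $S$ (each coordinate-$\sgn$ contributes the degree-$1$ Hermite coefficient $\sqrt{2/\pi}$), while a single ReLU $\relu(w\cdot x)$ has a Hermite expansion whose overlap with those monomials is controlled multilinearly by $\prod_{i\in S}w_i$ at the leading order, with higher-order corrections that can be bundled. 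Combining this with the non-vanishing ReLU Hermite coefficients at the relevant degrees pins down the correlation---and hence the square-loss---in terms of how tightly $w$ concentrates on $S$, yielding the required $2^{-\Theta(k)}$ gap.

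Given the gap lemma, the reduction closes routinely: invoke the hypothesized agnostic ReLU learner with accuracy $\epsilon=2^{-ck}$ for a suitably small constant $c$; any returned ReLU with square-loss $\opt+\epsilon$ must, by the gap lemma, have its weight mass concentrated on $S$, so $S$ can be recovered by reading off the top-$k$ coordinates (or by verifying the polynomially many plausible supports via empirical loss). Composing with the $d^{\Omega(k)}$ SLPN lower bound and $k=\Theta(\log(1/\epsilon))$ gives the running-time bound. The delicate point throughout is the quantitative tightness of the gap lemma, since a loss of even a $\log k$ factor in its exponent would soften the final bound by a $\log\log(1/\epsilon)$ factor; the argument therefore has to exploit the structure of the parity-derived target rather carefully, and not merely the leading-order Hermite contribution.
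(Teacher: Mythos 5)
Your skeleton is the right one --- reduce from $k$-SLPN with $k=\Theta(\log(1/\epsilon))$, embed $\{\pm 1\}^d$ into $\mathbb{R}^d$ by the sign-preserving half-normal trick, and use a Hermite-expansion correlation estimate between a ReLU supported on $S$ and the lifted parity --- and all of that matches the paper. But your recovery step has a genuine gap. You need the claim that \emph{every} ReLU within $\epsilon$ of optimal has its weight mass concentrated on $S$, so that $S$ can be read off the learner's output vector. The agnostic-learning guarantee gives you only the error of the returned hypothesis, not its structure, so you must prove this ``all near-optimal ReLUs concentrate on $S$'' statement yourself; it is much stronger than the existential statement your Hermite analysis delivers (that \emph{one} symmetric ReLU on $S$ has correlation $2^{-O(k)}$ with the lifted parity). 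Because the entire signal is of size $2^{-\Theta(k)}$, a weight vector that leaks a constant fraction of its mass off $S$ loses only a constant factor of that already-tiny correlation, so near-optimality constrains $w$ very weakly; moreover your plan to ``bundle'' the higher-order Hermite corrections does not work as stated, since those corrections are of the \emph{same} exponential order as the leading $\prod_{i\in S}w_i$ term and their signs must be controlled (the paper has to impose $k\equiv 2\pmod 4$ precisely so that, for its specific symmetric $w_S$, all terms $T_n$ in the sum share a sign and the leading term is a valid lower bound; for a general $w$ no such sign pattern is available).

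The paper sidesteps this entirely with a leave-one-out distinguishing argument that uses the learner purely as a black box: for each coordinate $j$ it deletes $x_j$ from the lifted samples and reruns the agnostic learner. If $j\in S$ the labels become independent of the remaining input, and a direct computation shows every ReLU then has square loss at least $\tfrac12-\tfrac{1}{4\pi}$; if $j\notin S$ the correlation lemma exhibits a ReLU with loss at most $\tfrac12-\tfrac{1}{4\pi}-2^{-O(k)}(1-2\eta)$. Comparing the validation error of the returned hypotheses against a threshold between these two values identifies the members of $S$ one at a time with $O(d)$ calls to the learner, requiring no structural information about the hypotheses at all. If you want to salvage your route, you would either have to prove the uniform concentration statement (hard, and not what your lemma gives), or switch to this membership-testing reduction.
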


Since gradient descent is known to be a {\em statistical-query} algorithm (see Section \ref{sec:sq}), a consequence of Theorem \ref{thm:mainlower} is the following:

\begin{corollary} Gradient descent fails to converge to the global minimum for learning the best-fitting ReLU with respect to square-loss in polynomial time, even when the marginals are Gaussian.  
\end{corollary}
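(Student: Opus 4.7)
The plan is to upgrade Theorem~\ref{thm:mainlower} from a conditional hardness statement (relying on the intractability of sparse parities with noise) to an \emph{unconditional} lower bound against statistical-query (SQ) algorithms, and then to invoke the fact cited in Section~\ref{sec:sq} that gradient descent on the square loss lives inside the SQ model. Since gradient descent is known to be SQ, a super-polynomial SQ lower bound for the ReLU learning problem immediately rules out convergence to the global minimum in polynomial time, with no cryptographic assumption left in the statement.

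The first step is to inspect the proof of Theorem~\ref{thm:mainlower} and observe that it proceeds by a black-box reduction from learning $k$-sparse parities (under a structured distribution pushed forward to a Gaussian) to agnostically learning a ReLU under Gaussian marginals, so that any SQ algorithm for the ReLU problem yields an SQ algorithm for the parity problem of essentially the same query complexity and tolerance. I then apply the classical unconditional SQ lower bound for learning parities of Blum--Furst--Jackson--Kearns--Mansour--Rudich, which requires no computational assumption and forces $d^{\Omega(k)}$ queries of $1/\mathrm{poly}(d)$ tolerance. Taking $k = \Theta(\log(1/\epsilon))$ to match the scaling behind Theorem~\ref{thm:mainlower} produces an unconditional $d^{\Omega(\log(1/\epsilon))}$ SQ lower bound for achieving error $\opt + \epsilon$ on the ReLU problem. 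Finally, I would record the SQ simulation of gradient descent explicitly: every coordinate of $\nabla_{\vw} \E_{(\vx,y)\sim \D}\bigl[(\relu_{\vw}(\vx) - y)^2\bigr]$ is an expectation of a bounded function of $(\vx,y)$, hence answerable by a single statistical query of inverse-polynomial tolerance, so a polynomial-time trajectory of gradient descent makes only polynomially many SQs. Composing the simulation with the lower bound yields the corollary.

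The main obstacle is making the first step precise, i.e.\ verifying that the reduction behind Theorem~\ref{thm:mainlower} is \emph{SQ-faithful}: the transformation from a sparse-parity instance to the corresponding ReLU distribution must itself be implementable via statistical queries, so that sample access on the ReLU side translates into SQ access (not full sample access) on the parity side, with only polynomial loss in tolerance. A secondary subtlety is nailing down the precise variant of gradient descent that is being ruled out---population or empirical, full batch or stochastic, with or without projection and clipping---and verifying that each of these fits the SQ abstraction of Section~\ref{sec:sq} with polynomial tolerance, so that the corollary rules out the forms of gradient descent actually used in practice and not merely an idealized version.
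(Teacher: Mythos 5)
Your proposal is correct and follows essentially the same route as the paper: Section~\ref{sec:sq} carries out exactly this plan, re-running the reduction of Theorem~\ref{thm:main-thm-lower-bound} at the level of queries (each query $g$ on the ReLU side is simulated by $\tfrac{1}{2}\,g\circ\nu$ on the parity side), invoking the unconditional SQ-dimension lower bound of Theorem~\ref{thm:sq-szorenyi-proof} for parities, and observing that the gradient of the square loss is simulable by statistical queries. The one point to tighten is your claim that each gradient coordinate is ``a single statistical query'': the lower bound used applies only to correlation queries and target-independent queries, so one must (as the paper does) split $2\cdot 1_{+}(\vw\cdot\vx)(\relu_{\vw}(\vx)-y)\vx_i$ into one query of each type rather than treating it as an arbitrary bounded query in $(\vx,y)$.
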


This above corollary is unconditional (i.e. does not rely on any hardness assumptions) and shows the necessity of the realizable/noiseless setting in the work of Soltanolkotabi \cite{soltanolkotabi2017learning} and Brutzkus and Globerson \cite{brutzkus2017globally}. We also give the first approximation algorithm for finding the best-fitting ReLU with respect to Gaussian marginals:

\begin{theorem}[Informal version of Theorem \ref{thm:main-thm-upper-bound}]
There exists a polynomial-time algorithm for finding a ReLU with error $O(\opt^{2/3}) + \epsilon$.
\end{theorem}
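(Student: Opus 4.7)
The plan is to reduce the agnostic ReLU learning problem to agnostic learning of halfspaces under $0/1$ loss with Gaussian marginals, a task for which polynomial-time algorithms are available (e.g.\ via Awasthi, Balcan, and Long). A threshold $\theta > 0$ is introduced up front and optimized at the end.

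First, I would transform the training set into a binary-labeled classification problem by relabeling each $(\vx, y)$ as $z = \sgn(y - \theta) \in \{-1, +1\}$. The crux of the reduction is that, for an appropriate candidate halfspace derived from $\vw^*$ (one may need a small offset in the intercept to handle both directions of disagreement), a mismatch with $z$ forces $|\relu_{\vw^*}(\vx) - y|$ to be at least a constant multiple of $\theta$. Markov's inequality applied to $\E_\D[(\relu_{\vw^*}(\vx) - y)^2] = \opt$ then bounds the $0/1$ error of the best halfspace on the derived task by $O(\opt/\theta^2)$. Feeding this task into a polynomial-time agnostic halfspace learner for Gaussian marginals returns $\hat h(\vx) = \sgn(\vv \cdot \vx - b)$ with $0/1$ error $O(\opt/\theta^2) + \epsilon$, and Gaussian anti-concentration converts this guarantee into a small-angle bound between $\vv$ and $\vw^*$.

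Next, since halfspace learning only recovers a direction (and intercept), I would sweep a fine grid of scalars $c > 0$, form candidate ReLUs $\max(0, c \vv \cdot \vx)$, and return the one with smallest empirical square loss on a held-out sample. The error analysis splits the square loss of this output into three regimes: (i) the decision-boundary band $|\vw^* \cdot \vx| \le \theta$, where $\relu_{\vw^*} \le \theta$ combines with the $O(\theta)$ Gaussian mass of the band to give a small contribution; (ii) the mass where the learned halfspace disagrees with $h^*$, contributing $O(\opt/\theta^2)$ since labels are in $[0,1]$; and (iii) the bulk, where a Cauchy--Schwarz decomposition of $(\relu_{\hat \vw} - y)^2$ into $(\relu_{\hat \vw} - \relu_{\vw^*})^2$, $(\relu_{\vw^*} - y)^2$, and a cross term contributes $O(\opt)$ plus a squared-angle term $O(\phi^2)$ controlled by the halfspace guarantee. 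Balancing these contributions by choosing $\theta$ appropriately yields the claimed $O(\opt^{2/3}) + \epsilon$ rate.

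The main obstacle is translating the coarse $0/1$ guarantee of the halfspace learner back into a fine-grained square-loss guarantee for a ReLU hypothesis. The three regimes interact in subtle ways, and the candidate halfspace in the Markov step has to be chosen carefully so that both directions of disagreement cost at least $\theta$ in label error; getting the direction bound, the band contribution, and the scaling sweep to line up is what produces the $\opt^{2/3}$ trade-off rather than the weaker $\sqrt{\opt}$ rate that a more naive reduction would give.
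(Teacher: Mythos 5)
Your overall reduction is the paper's: threshold the labels at a level $\theta$ to get a Boolean problem, bound the $0/1$ error of the halfspace induced by $\vw^*$ via Markov plus the Gaussian mass of a band near the boundary, run the Awasthi--Balcan--Long agnostic halfspace learner, and convert its guarantee into closeness of the recovered direction to $\vw^*$. (Two small remarks: the offset-intercept trick does not remove the band --- a point with $\vw^*\cdot\vx$ just above $\theta$ and $y$ just below it still disagrees while $|y-\relu_{\vw^*}(\vx)|$ is arbitrarily small --- so the best-halfspace $0/1$ error is $O(\opt/\theta^2)+O(\theta)$, not $O(\opt/\theta^2)$; and the sweep over scalings $c$ is unnecessary here since $\opt$ is defined over unit-norm $\vw$ and the halfspace learner already returns a unit vector.)

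The genuine gap is in the final error analysis. Your regime (ii) charges the square loss on the disagreement region by (mass of region) $\times$ $O(1)$, i.e.\ a term of order $\opt/\theta^2$ entering the bound \emph{linearly}. Balancing $\opt/\theta^2$ against the $\theta^2$ coming from the squared-angle term in regime (iii) forces $\theta=\opt^{1/4}$ and yields only $O(\opt^{1/2})$; with $\theta=\opt^{1/3}$ the regime-(ii) term alone is $\opt^{1/3}$. Either way the three-regime decomposition cannot reach $\opt^{2/3}$ as written (and the pointwise $O(1)$ bound is itself shaky, since $\relu_{\hat\vw}(\vx)$ is unbounded and a Cauchy--Schwarz patch only worsens the exponent). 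The paper avoids any partition of the space: from the $0/1$ guarantees it first deduces $\|\hat\vw-\vw^*\|\le O(\opt/\theta^2+\theta)$ (small $0/1$ errors for both $\hat\vw$ and $\vw^*$ imply small mutual disagreement, hence small angle, hence small Euclidean distance for unit vectors), and then bounds the \emph{entire} excess square loss by $2\,\E[((\hat\vw-\vw^*)\cdot\vx)^2]=2\|\hat\vw-\vw^*\|^2$ using $1$-Lipschitzness of $\relu$. The $0/1$-level quantity thus only ever appears squared, $(\opt/\theta^2+\theta)^2$, and setting $\theta=\opt^{1/3}$ gives $O(\opt^{2/3})$. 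You need to replace the region-by-region accounting with this global Lipschitz step (or otherwise arrange that the disagreement-mass term never appears unsquared) for the claimed exponent to come out.
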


The above result uses a novel reduction from learning a ReLU to the problem of learning a halfspace with respect to $0/1$ loss.   We note that the problem of finding a ReLU with error $O(\opt) + \epsilon$ remains an outstanding open problem. 

\subsection{Our Techniques}
\paragraph{Hardness Result.} For our hardness result, we follow the same approach as Klivans and Kothari \cite{klivans2014embedding} who gave a reduction from learning sparse parity with noise to the problem of agnostically learning {\em halfspaces} with respect to Gaussian distributions.  The idea is to embed examples drawn from $\{-1,1\}^{d}$ into $\mathbb{R}^{d}$ by multiplying each coordinate with a random draw from a half-normal distribution.  The key technical component in their result is a correlation lemma showing that for a parity function on variables indicated by index set $S$, the majority function on the same index set is weakly correlated with a {\em Gaussian lift} of the parity function on $S$.

In our work we must overcome two technical difficulties.  First, in the Klivans and Kothari result, it is obvious that for distributions induced by learning sparse parity with noise, the best fitting majority function will be the one that is defined on inputs specified by $S$.  In our setting with respect to ReLUs, however, the constant function $1/2$ will have square-loss $1/4$, and this may be much lower than the square-loss of any function of the form $\max(0, \vw \cdot \vx)$.  Thus, we need to prove the existence of a gap between the correlation of ReLUs with random noise (see Claim \ref{claim:inactive_index}) versus the correlation of ReLUs with parity (see Claim \ref{claim:active_index}).    

Second, Klivans and Kothari use known formulas on the discrete Fourier coefficients of the majority function and an application of the central limit theorem to analyze how much the best-fitting majority correlates with the Gaussian lift of parity.  No such bounds are known, however, for the ReLU function.  As such we must perform a (somewhat involved) analysis of the ReLU function's Hermite expansion in order to obtain quantitative correlation bounds.

\paragraph{Approximation Algorithm.} For our polynomial-time algorithm that outputs a ReLU with error $\mathsf{O(\opt^{2/3})} + \epsilon$, we apply a novel reduction to agnostically learning halfspaces.  We give a simple transformation on the training set to a Boolean learning problem and show that the weight vector $\vw$ corresponding to the best fitting {\em halfspace} on this transformed data set is not too far from the weight vector corresponding to the best fitting ReLU.  We can then apply recent work for agnostically learning halfspaces with respect to Gaussians that have constant-factor approximation error guarantees.  The exponent $2/3$ appears due to the use of an averaging argument (see Section \ref{sec:approx}).

\subsection{Related Work}
Several recent works have proved hardness results for finding the best-fitting ReLU with respect to square loss (equivalently, agnostically learning a ReLU with respect to square loss).  Results showing NP-hardness (e.g.,  \cite{journals/corr/abs-1810-04207,journals/corr/abs-1809-10787}) use marginal distributions that encode hard combinatorial problems.  The resulting marginals are far from Gaussian.  Work due to Goel et al. \cite{goel2016reliably} uses a reduction from sparse parity with noise but only obtains hardness results for learning with respect to discrete distributions (uniform on $\{0,1\}^{d}$).  

Using parity functions as a source of hardness for learning deep networks has been explored recently by Shalev-Shwartz et. al.~\cite{shalev2017failures} and Abbe and Sandon~\cite{journals/corr/abs-1812-06369}.  Their results, however, do not address the complexity of learning a {\em single} ReLU or consider the case of Gaussian marginals.  Shamir~\cite{Shamir18} proved that gradient descent fails to learn certain classes of neural networks with respect to Gaussian marginals, but these results do not apply to learning a single ReLU~\cite{vempala2018gradient}.

In terms of positive results for learning a ReLU, work due to Kalai and Sastry \cite{KalaiS09} (and follow-up work \cite{KKKS11}) gave the first efficient algorithm for learning any generalized linear model (GLM) that is monotone and Lipschitz, a class that includes ReLUs.  Their algorithms work for {\em any} distribution and can tolerate bounded, mean-zero and additive noise. Soltanolkotabi \cite{soltanolkotabi2017learning} and Brutzkus and Globerson \cite{brutzkus2017globally} were the first to prove that gradient descent converges to the unknown ReLU in polynomial time with respect to Gaussian marginals as long as the labels have {\em no noise}. Other works for learning one-layer ReLU networks with respect to Gaussian marginals or marginals with milder distribution assumptions \cite{journals/corr/abs-1806-07808, journals/corr/abs-1711-00501, zhong2017recovery, journals/corr/abs-1810-06793, conf/icml/GoelKM18, MR2018} also assume a noiseless training set or training set with mean-zero i.i.d. (typically sub-Gaussian) noise. This is in contrast to the setting here (agnostic learning), where we assume nothing about the noise model. 

There are several works for the related (but different) problem of agnostically learning {\em halfspaces} with respect to Gaussian marginals \cite{kalai2008agnostically, awasthi2014power, Zhang18, DKS18a}.  While agnostically learning ReLUs may seem like an easier problem than agnostically learning halfspaces (at first glance the learner sees ``more information'' from the ReLU's real-valued labels),  the quantitative relationship between the two problems is still open.  In the halfspace setting, we can assume without loss of generality that an adversary has flipped an $\opt$ fraction of the labels.  In contrast, in the setting with ReLUs and square loss, it is possible for the adversary to corrupt {\em every} label.

\section{Preliminaries}
Define $\relu(a) = \max(0,a)$ and the set of functions $\calC_\relu:= \{\relu_\vw| \vw \in \mathbb{R}^d, \| \vw \|_2 \leq 1\}$ where $\relu_\vw(\vx) = \max(0, \vw \cdot \vx)$. Define $\sgn(a)$ to be 1 if $a \ge 0$ and -1 otherwise. Let $\err_{\D}(h):= \E_{(\vx,y) \sim \D}[(h(\vx) - y)^2]$, Also define $\opt_\D(\calC) = \min_{c \in \calC} \err_{\D}(c)$ to be the error of the best-fitting $c \in \calC$ for distribution $\D$. We will use $\vx_{-i}$ to denote the vector $\vx$ restricted to the indices except $i$. The `half-normal' distribution will refer to the standard normal distribution truncated to $\mathbb{R}^{\geq 0}$. We will use $n$ and its subscripted versions to denote natural numbers unless otherwise stated. In this paper, we will suppress the confidence parameter $\delta$, since one can use standard techniques to amplify the probability of success of our learning algorithms.

\paragraph{Agnostic learning.} The model of learning we work with in the paper is the agnostic model of learning. In this model the labels are allowed to be arbitrary and the task of the learner is to output a hypothesis within an $\epsilon$ error of the optimal. More formally,
\begin{definition}
A class $\calC$ is said to be agnostically learnable in time $t$ over the Gaussian distribution to error $\epsilon$ if there exists an algorithm ${\cal A}$ such that for any distribution $\D$ on $X \times Y$ with the marginal on $X$ being Gaussian, ${\cal A}$ uses at most $t$ draws from ${\cal D}$, runs in time at most $t$, and outputs a hypothesis $h \in \calC$ such that $\err_\D(h) \le  \opt_\D(\calC) + \epsilon$.
\end{definition}
We assume that ${\cal A}$ succeeds with constant probability.  Note that the algorithm above outputs the ``best-fitting'' $c\in \calC$ with respect to $\D$ up to an additive $\epsilon$.  We will denote $\widehat{\err}_{\mathcal{S}}(h)$ to be the empirical error of $h$ over samples $\mathcal{S}$.

\paragraph{Learning Sparse Parities with Noise.} In this work we will show that agnostically learning $\calC_\relu$ over the Gaussian distribution is as hard as the problem of learning sparse parities with noise over the uniform distribution on the hypercube.
\begin{definition}[$k$-SLPN]
Given access to samples drawn from the uniform distribution over $\{\pm 1\}^d$ and target function $y$ being the parity function over an unknown set $S \subseteq [d]$ of size $k$, the problem of learning sparse parities with noise is the problem of recovering the set $S$ given access to noisy labels where the label is flipped with probability $\eta$.
\end{definition}
Learning sparse parities with noise is generally considered to be a computationally hard problem and has been used to give hardness results for both supervised \cite{goel2016reliably} and unsupervised learning problems \cite{conf/nips/BreslerGS14a}.  The current best known algorithm for solving sparse parities with constant noise rate is due to Valiant~\cite{valiant2015finding} and runs in time $\approx d^{0.8k}$.

\begin{assumption}\label{conj:SLPN}
Any algorithm for solving $k$-SLPN up to constant error must run in time $d^{\Omega(k)}$.
\end{assumption}

\paragraph{Gaussian Lift of a Function} Our reduction will require the following definition of a Gaussian lift of a boolean function from \cite{klivans2014embedding}. 

\begin{definition}[Gaussian lift \cite{klivans2014embedding}]
	The Gaussian lift of a function $f : \{ \pm 1 \}^d \rightarrow \mathbb{R}$ is the function $f^{\gamma} : \mathbb{R}^d \rightarrow \mathbb{R}$ such that for any $x \in \mathbb{R}^d$, $f^{\gamma}(x) = f(\sgn(x_1), \dots, \sgn(x_d))$. 
\end{definition} 

\paragraph{Hermite Analysis and Gaussian Density}
We will assume that the marginal over our samples $\vx$ is the standard normal distribution $N(0, I_{d})$. This implies that $\vw \cdot \vx$ for a vector $\vw$ is distributed as $N(0, \|\vw\|^2)$. We recall the basics of Hermite analysis. We say a function $f: \mathbb{R} \rightarrow \mathbb{R}$ is square integrable if $\E_{N(0, 1)}[f^2] < \infty$.  For any square integrable function $f$ define its Hermite expansion as $f(x) = \sum_{i=0}^\infty \hat{f}_i \bar{H}_i(x)$ where $\bar{H}_i(x) = \frac{H_i(x)}{\sqrt{i!}}$ are the normalized Hermite polynomials, and $H_i$ the unnormalized (probabilists) Hermite polynomials. The normalized Hermite polynomials form an orthonormal basis with respect to the univariate standard normal distribution ($\E[\bar{H}_i(x)\bar{H}_j(x)] =\delta_{ij}$).   The associated inner product for square integrable functions $f, g : \mathbb{R} \rightarrow \mathbb{R}$ is defined as $\langle f, g \rangle := \E_{x \sim N(0, 1)} [ f(x)g(x)]$. Each coefficient $\hat{f}_i$ in the expansion of $f(x)$ satisfies $\hat{f}_i = E_{x \sim N(0,1)}[f(x) \bar{H}_i(x)]$.

  We will need the following facts about Hermite polynomials. 
\begin{fact}
	For all $m \ge 0$, $H_{2m+1}(0) = 0$ and $H_{2m} = (-1)^m\frac{(2m)!}{m!2^m}$.
\end{fact}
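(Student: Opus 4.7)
The plan is to derive both assertions from the generating function identity for the probabilists' Hermite polynomials, namely $\sum_{n=0}^{\infty} H_n(x) \frac{t^n}{n!} = e^{xt - t^2/2}$. Setting $x = 0$ collapses the right-hand side to $e^{-t^2/2} = \sum_{m=0}^{\infty} \frac{(-1)^m}{m!\,2^m} t^{2m}$, a series containing only even powers of $t$. Matching the coefficient of $t^{2m+1}$ on both sides immediately forces $H_{2m+1}(0)/(2m+1)! = 0$, and matching the coefficient of $t^{2m}$ gives $H_{2m}(0)/(2m)! = (-1)^m/(m!\,2^m)$, which rearranges to the claimed closed form.

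If a more self-contained argument is preferred, I would instead induct on $n$ using the standard three-term recurrence $H_{n+1}(x) = x H_n(x) - n H_{n-1}(x)$. Evaluating at $x=0$ yields the scalar recursion $H_{n+1}(0) = -n H_{n-1}(0)$, with base cases $H_0(0)=1$ and $H_1(0)=0$. An easy induction on parity then gives $H_{2m+1}(0) = 0$ for all $m \ge 0$ and $H_{2m}(0) = -(2m-1)\,H_{2m-2}(0)$. Iterating the latter produces $H_{2m}(0) = (-1)^m (2m-1)(2m-3)\cdots 1 = (-1)^m (2m-1)!!$, and then the classical identity $(2m-1)!! = (2m)!/(2^m m!)$ finishes the derivation.

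There is essentially no obstacle here; both routes are a few lines of standard manipulation. The only care needed is to use the probabilists' convention consistently (so that the generating function carries $e^{-t^2/2}$ rather than $e^{-t^2}$), since the paper has already fixed this convention in its definition of $H_i$ and $\bar{H}_i$. I would present whichever route integrates most smoothly with the Hermite-expansion calculations later used to analyze $\relu$; if those computations lean on the exponential generating function, approach one keeps the toolkit uniform, while the recurrence-based proof is preferable if only pointwise values of the $H_n$ are invoked elsewhere.
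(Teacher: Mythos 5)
Your proposal is correct: both the generating-function route and the recurrence route are standard, complete derivations of the stated values, and the small details (the probabilists' normalization giving $e^{xt-t^2/2}$, the identity $(2m-1)!!=(2m)!/(2^m m!)$) are handled properly. Note that the paper itself offers no proof of this Fact at all --- it is stated as a known property of Hermite polynomials --- so there is nothing to compare against; either of your arguments would serve as a self-contained justification. If you want the choice that meshes best with the rest of the paper, the recurrence-based proof is the natural one: the proof of Claim~\ref{lem:hrelu} in the appendix already invokes the three-term recurrence $H_{n+1}(x) = xH_n(x) - nH_{n-1}(x)$, so evaluating that same recurrence at $x=0$ keeps the toolkit uniform, exactly as you anticipated.
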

\begin{fact}[\cite{kalai2008agnostically}]\label{lem:hsgn}
	$\hat{\sgn}_0 = 0$ and for $i \ge 1$, $\hat{\sgn}_i = \sqrt{\frac{2}{\pi i!}}H_{i-1}(0)$.
\end{fact}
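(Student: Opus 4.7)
The plan is to compute the Hermite coefficients of $\sgn$ directly from the defining inner product $\hat{\sgn}_i = \E_{x\sim N(0,1)}[\sgn(x)\bar{H}_i(x)]$ and then simplify using two classical identities: the Rodrigues formula for the probabilists' Hermite polynomials, $H_n(x)\phi(x) = (-1)^n \frac{d^n}{dx^n}\phi(x)$, where $\phi$ is the standard Gaussian density, and the parity relation $H_n(-x)=(-1)^n H_n(x)$.

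First I would dispose of the $i=0$ case immediately: $\bar{H}_0 \equiv 1$, so $\hat{\sgn}_0 = \E[\sgn(x)] = 0$ by the symmetry of the Gaussian. Next, for $i\ge 1$, I write
\begin{equation*}
\hat{\sgn}_i = \frac{1}{\sqrt{i!}}\int_{-\infty}^{\infty}\sgn(x)\,H_i(x)\,\phi(x)\,dx = \frac{1}{\sqrt{i!}}\Bigl(\int_0^{\infty}H_i(x)\phi(x)\,dx - \int_{-\infty}^{0}H_i(x)\phi(x)\,dx\Bigr).
\end{equation*}
Applying the change of variable $x\mapsto -x$ to the second integral and using $H_i(-x)=(-1)^i H_i(x)$, both integrals collapse: when $i$ is even, $\sgn(x)H_i(x)$ is an odd function and the total integrand vanishes, and when $i$ is odd it is even, giving $2\int_0^\infty H_i(x)\phi(x)\,dx$. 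Note this matches the claimed formula automatically in the even case because $H_{i-1}(0)=0$ when $i-1$ is odd, so I only need to verify the odd-$i$ case.

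For odd $i\ge 1$, the key step is to rewrite the integrand as a total derivative. From the Rodrigues formula, $H_i(x)\phi(x) = -\frac{d}{dx}\bigl[H_{i-1}(x)\phi(x)\bigr]$, which I would verify by pulling one derivative outside the $n$-th derivative of $\phi$. Then
\begin{equation*}
2\int_0^{\infty}H_i(x)\phi(x)\,dx = -2\bigl[H_{i-1}(x)\phi(x)\bigr]_0^{\infty} = 2\,H_{i-1}(0)\,\phi(0) = \sqrt{\tfrac{2}{\pi}}\,H_{i-1}(0),
\end{equation*}
using that $H_{i-1}(x)\phi(x)\to 0$ as $x\to\infty$ (polynomial times exponential decay) and $\phi(0)=1/\sqrt{2\pi}$. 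Dividing by $\sqrt{i!}$ yields the claimed $\hat{\sgn}_i = \sqrt{2/(\pi i!)}\,H_{i-1}(0)$.

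The only mildly subtle step is verifying $H_i(x)\phi(x) = -\frac{d}{dx}[H_{i-1}(x)\phi(x)]$ from the Rodrigues representation $H_n(x)\phi(x) = (-1)^n \frac{d^n}{dx^n}\phi(x)$; everything else is bookkeeping with parities and boundary terms. Once that identity is in hand the computation is a short telescoping of a single derivative, and the even/odd parity argument ensures the single formula covers all $i\ge 1$.
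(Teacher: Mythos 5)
Your derivation is correct: the parity argument handles even $i$, the Rodrigues identity $H_i(x)\phi(x) = -\frac{d}{dx}[H_{i-1}(x)\phi(x)]$ is valid, and the boundary evaluation gives exactly $\sqrt{2/(\pi i!)}\,H_{i-1}(0)$ (one can sanity-check $i=1$: $\E[|x|]=\sqrt{2/\pi}$). The paper states this fact without proof, citing \cite{kalai2008agnostically}, but your telescoping-derivative technique is the same one the paper uses in the appendix to prove the analogous ReLU coefficient formula (Claim~\ref{lem:hrelu}), so there is nothing further to add.
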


\section{Hardness of Learning ReLU}
In this section, we will show that if there is an algorithm that agnostically learns a ReLU in polynomial time, then there is an algorithm for learning sparse parities with noise in time $d^{o(k)}$, violating Assumption \ref{conj:SLPN}. We will follow the approach of \cite{klivans2014embedding}. Let $\chi_S$ be an unknown parity for some $S \subseteq [d]$. We will show that there is an unbiased ReLU that is correlated with the Gaussian lift of the unknown sparse parity function.  Notice that dropping a coordinate $j \in S$ from the input samples makes the labels of the resulting training set totally independent from the input.  In contrast, dropping $j \notin S$ results in a training set that is still labeled by a noisy parity. Therefore, we can use an agnostic learner for ReLUs to detect a correlated ReLU and distinguish between the two cases.  This allows us to identify the variables in $S$ one by one. 

We formalize the above approach by first proving the following key property,
\begin{lemma}[$\relu$ Correlation Lemma]\label{thm:relu_correlated}
	Let $\chi_{S}^{\gamma}$ denote the Gaussian lift of the parity on variables in $S \subset [d]$. For every $S \subset [d]$ with $|S| \leq k$ and $k = 4l + 2$ for some $l \ge 0$, there exists $\relu_{\vw_S}$ such that $\langle \relu_{\vw_S}, \chi_{\alpha}^{\gamma} \rangle \ge 2^{-O(k)}$ where $\relu_{\vw_S}$ only depends on variables in $S$. 
\end{lemma}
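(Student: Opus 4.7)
The plan is to take $\vw_S$ to be the unit vector supported on $S$ whose $k$ nonzero coordinates are all equal to $1/\sqrt{k}$. Then $u := \vw_S \cdot \vx \sim N(0,1)$ and $\relu_{\vw_S}$ depends only on coordinates in $S$. Expanding $\relu$ in the univariate Hermite basis and invoking the standard Hermite addition formula $\bar H_n(\sum_i w_i x_i) = \sum_{|\alpha|=n}\sqrt{n!/\prod_i \alpha_i!}\,\prod_i w_i^{\alpha_i}\prod_i \bar H_{\alpha_i}(x_i)$ (valid since $\|\vw_S\|=1$), combined with the univariate expansion $\sgn(x) = \sum_{j\ge 1}\hat\sgn_j \bar H_j(x)$ (Fact \ref{lem:hsgn}) and multivariate orthonormality, collapses the inner product to
\begin{equation*}
\langle \relu_{\vw_S},\chi_S^\gamma\rangle \;=\; \sum_{\substack{\alpha:\ \alpha_j = 0\ \forall j\notin S \\ \alpha_i\geq 1\text{ odd}\ \forall i\in S}} \widehat{\relu}_{|\alpha|}\,\sqrt{|\alpha|!/\prod_i \alpha_i!}\;k^{-|\alpha|/2}\prod_{i\in S}\hat\sgn_{\alpha_i}.
\end{equation*}
Since $k$ is even and each $\alpha_i$ is odd, $|\alpha|$ is always even and at least $k$, with equality exactly at $\alpha = \mathbf{1}_S$.

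The main step is a sign analysis proving every term in this sum is nonnegative when $k = 4l+2$. Writing $\alpha_i = 2\beta_i+1$, Fact \ref{lem:hsgn} combined with $H_{2j}(0) = (-1)^j(2j)!/(j!2^j)$ gives $\hat\sgn_{2\beta_i+1}$ the sign $(-1)^{\beta_i}$, so $\prod_{i\in S}\hat\sgn_{\alpha_i}$ has sign $(-1)^{(|\alpha|-k)/2}$. For $\widehat{\relu}_n$, I will use the decomposition $\relu(z) = z/2 + |z|/2$ (the linear piece gives zero contribution since $\chi_S^\gamma$ has no degree-$1$ Hermite mass for $k\geq 2$) together with the three-term recursion $z\bar H_n = \sqrt{n+1}\bar H_{n+1}+\sqrt n\bar H_{n-1}$ to derive
\begin{equation*}
\widehat{|z|}_{2m} \;=\; \sqrt{2m+1}\,\hat\sgn_{2m+1}+\sqrt{2m}\,\hat\sgn_{2m-1}.
\end{equation*}
The two summands carry opposite signs $(-1)^m$ and $(-1)^{m-1}$, and Stirling's approximation shows they nearly cancel, leaving $\widehat{|z|}_{2m} = (-1)^{m+1}\,\Theta(m^{-5/4})$. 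Hence $\widehat{\relu}_n$ for even $n$ has sign $(-1)^{n/2+1}$, and the overall sign of each summand is $(-1)^{(|\alpha|-k)/2 + |\alpha|/2+1} = (-1)^{|\alpha|-k/2+1}$, which for $k = 4l+2$ reduces to $(-1)^{|\alpha|} = +1$ on the even values $|\alpha|\in\{k, k+2,\ldots\}$.

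Since every term is nonnegative, I can lower bound the inner product by the unique $\alpha = \mathbf{1}_S$ contribution $M = \widehat{\relu}_k\,\sqrt{k!/k^k}\,(2/\pi)^{k/2}$. Stirling gives $\sqrt{k!/k^k} = \Theta(k^{1/4}e^{-k/2})$ and the cancellation analysis above gives $|\widehat{\relu}_k| = \Theta(k^{-5/4})$, so $M = \Theta(k^{-1}(2/(e\pi))^{k/2}) = 2^{-O(k)}$, proving the lemma.

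The hard part is the sign bookkeeping together with the near-cancellation in the formula for $\widehat{|z|}_{2m}$: estimating each of the two summands separately is not tight enough, so one has to retain the subleading term in the recursion. The restriction $k\equiv 2\pmod 4$ is chosen precisely so that every higher-order contribution at $|\alpha| = k+2, k+4,\ldots$ is coherent with the leading one, ruling out any destructive cancellation and letting the lower bound be witnessed by $M$ alone.
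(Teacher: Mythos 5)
Your proposal is correct and follows essentially the same route as the paper: the same choice of direction (uniform over $S$, up to an irrelevant $\sqrt{2\pi}$ rescaling of the norm), the same Hermite/multinomial expansion and orthogonality collapse, the same observation that $k\equiv 2\pmod 4$ makes every term in the resulting series positive, and the same Stirling lower bound on the leading $|\alpha|=\mathbf{1}_S$ term, which matches the paper's $T_k=\Theta\bigl(k^{-1}(2/(e\pi))^{k/2}\bigr)$ exactly. The only cosmetic difference is that you derive the sign and magnitude of $\widehat{\relu}_{2m}$ via $\relu(z)=z/2+|z|/2$ and $\widehat{|z|}_{n}=\sqrt{n+1}\,\hat{\sgn}_{n+1}+\sqrt{n}\,\hat{\sgn}_{n-1}$, whereas the paper integrates directly using $xH_n=H_{n+1}+nH_{n-1}$; both reduce to the same near-cancellation $H_n(0)+nH_{n-2}(0)$.
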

\begin{proof} 
	Let $\vw_S = \frac{1}{\sqrt{2\pi k}} \sum_{i\in S} \ve^{(i)}$ where $\ve^{(i)}$ is 1 at coordinate $i$ and 0 everywhere else. We will show that
    \begin{align*}
    \langle \relu_{\vw_S}, \chi^{\gamma}_S \rangle = \frac{1}{\sqrt{2\pi}}\E_{\vz \sim \mathcal{N}(0, \rmI_d)}\left[\relu\left(\frac{\sum_{i \in S} z_i}{\sqrt{k}}\right)\left(\prod_{i \in S}\sgn(z_i)\right)\right] \ge 2^{-O(k)}.
    \end{align*}

	Let $\widehat{\sgn}_{n}$ and $\widehat{\relu}_n$ denote the degree $n$ Hermite coefficients of the $\sgn$ function and $\relu$ function respectively. It is easy to see that the Hermite expansion of the Gaussian lift of a parity supported on $S$ is, 
	\begin{equation}\label{eqn:h_expansion_parity}
	\chi_{S}^{\gamma}(\vz) = \prod_{i \in S}\sgn(z_i) = \prod_{i \in S}\left(\sum_{n=0}^{\infty} \hat{\sgn}_n \bar{H}_n(z_i) \right) = \sum_{n_1, \ldots, n_k}\prod_{i \in S} \hat{\sgn}_{n_i} \bar{H}_{n_i}(z_i)	\end{equation}

In order to finish the proof of Lemma~\ref{thm:relu_correlated} we will need the expansion of $\relu\left(\frac{\sum_i z_i}{\sqrt{k}}\right)$ in terms of products of univariate Hermite polynomials. Toward this end we establish the following claims.
\begin{claim}[Hermite expansion: univariate $\relu$] \footnote{Proof in supplementary material}\label{lem:hrelu}
	$\hat{\relu}_0 = 1/\sqrt{2\pi}$, $\hat{\relu}_1 = 1/2$ and for $i \ge 2$, $\hat{\relu}_i = \frac{1}{\sqrt{2\pi i!}}(H_{i}(0) + iH_{i-2}(0))$.
\end{claim}
	\begin{claim}[Hermite expansion: multivariate $\relu$] \label{claim:h_expansion_relu} For any $S \subseteq [d]$ with $|S| = k$,
	\[ 	\relu\left(\frac{\sum_{i \in S} z_i}{\sqrt{k}}\right) = \sum_{n=0}^{\infty} \frac{\hat{\relu}_n}{k^{n/2}}\cdot\sum_{n_1 + \ldots + n_k = n}\left(\frac{n!}{n_1! \cdots n_k!}\right)^{1/2}\prod_{j=1}^k\bar{H}_{n_j}(z_j) \]
	\end{claim}
Combining Equation~\ref{eqn:h_expansion_parity} and Claim~\ref{claim:h_expansion_relu} now yields,
\begin{align*}
&\E_{\vz \sim \mathcal{N}(0, \rmI_d)}\left[\relu\left(\frac{\sum_{i \in S} z_i}{\sqrt{k}}\right)\cdot\prod_{i \in S}\sgn(z_i)\right] \\
&= \E_{\vz \sim \mathcal{N}(0, \rmI_d)}\left[\left(\sum_{n=0}^{\infty} \frac{\hat{\relu}_n}{k^{n/2}}\sum_{n_1 + \ldots + n_k = n}\left(\frac{n!}{n_1! \cdots n_k!}\right)^{1/2}\prod_{i=1}^k \bar{H}_{n_i}(z_i)\right)\right.\\
&\qquad \left. \times\left(\sum_{m_1, \ldots, m_k}\prod_{j=1}^k \hat{\sgn}_{m_j} \bar{H}_{m_j}(z_j)\right)\right]\\
&= \sum_{n=0}^{\infty} \frac{\hat{\relu}_n}{k^{n/2}}\sum_{n_1 + \ldots + n_k = n}\sum_{m_1, \ldots, m_k}\left(\frac{n!}{n_1! \cdots n_k!}\right)^{1/2}\prod_{i=1}^k\hat{\sgn}_{m_i}\E[\bar{H}_{n_i}(z_i) \bar{H}_{m_i}(z_i)]\\
&= \sum_{n=0}^{\infty} \frac{\hat{\relu}_n}{k^{n/2}}\sum_{n_1 + \ldots + n_k = n}\left(\frac{n!}{n_1! \cdots n_k!}\right)^{1/2}\prod_{i=1}^k \hat{\sgn}_{n_i}\\
\end{align*}
 From Fact \ref{lem:hsgn} and Claim \ref{lem:hrelu} we see that $\hat{\sgn}_{2m} = 0$ and $\hat{\relu}_{2m + 1} = 0$ for $m \ge 1$. Additionally, since $\hat{\sgn}_0 = 0$ we see that each $n_i \geq 1$. This gives us, 
\begin{align*}
& \E_{\vz \sim \mathcal{N}(0, \rmI_d)}\left[\relu\left(\frac{\sum_{i \in S} z_i}{\sqrt{k}}\right)\cdot\prod_{i \in S}\sgn(z_i)\right] \\
&= \sum_{n = k}^{\infty} \frac{1}{\sqrt{2 \pi n!}k^{n/2}}(H_n(0) + nH_{n-2}(0))\sum_{\substack{n_1, \ldots, n_k \ge 1\\ n_1 + \ldots + n_k = n}}\left(\frac{n!}{n_1! \cdots n_k!}\right)^{1/2}\prod_{i=1}^k \sqrt{\frac{2}{\pi n_j!}}H_{n_j - 1}(0)\\
&= \sum_{n = k}^{\infty} \frac{1}{\sqrt{2\pi} k^{n/2}}(H_n(0) + nH_{n-2}(0))\sum_{\substack{n_1, \ldots, n_k \ge 1\\ n_1 + \ldots + n_k = n}}\left(\frac{1}{n_1! \cdots n_k!}\right)^{3/2}\prod_{i=1}^k \sqrt{\frac{2}{\pi}} H_{n_j - 1}(0).
\end{align*}
To finish the proof of Lemma~\ref{thm:relu_correlated}, we will look at each term in the outer summation above. Let the term for any fixed $n \ge k$ be denoted by $T_n$. Since $\bar{H}_{i}(0) = 0$ for odd $i$, observe that $T_n$ is non-zero if and only if $n$ is even and each $n_i = 2n^\prime_i + 1$ for $n^\prime_i \ge 0$. We have
\begin{align*}
T_n &= \frac{1}{\sqrt{2\pi}k^{\frac{n}{2}}}\left((-1)^{\frac{n}{2}}\frac{n!}{(n/2)!2^{\frac{n}{2}}} + n (-1)^{\frac{n}{2}-1}\frac{(n - 2)!}{(\frac{n}{2}-1)!2^{\frac{n}{2}-1}}\right) \\
&\qquad \times \sum_{\substack{n^\prime_i \ge 0 \\ \sum_j n^\prime_j = \frac{n-k}{2}}}\left(\frac{1}{(2n^\prime_1 + 1)! \cdots (2n^\prime_k +1)!}\right)^{3/2}\prod_{j=1}^k \sqrt{\frac{2}{\pi}}(-1)^{n^\prime_j}\frac{(2n^\prime_j)!}{n^\prime_j!2^{n^\prime_j}}\\
&= \frac{(-1)^{\frac{n}{2} -1} n!}{\sqrt{2\pi}k^{\frac{n}{2}}(n/2)!2^{\frac{n}{2}}}\left(-1 + \frac{n}{n - 1}\right)\frac{2^{\frac{k}{2}}(-1)^{\frac{n - k}{2}}}{\pi^{\frac{k}{2}}}\\
&\qquad \times \sum_{\substack{n^\prime_j \ge 0\\ \sum_j n^\prime_j = \frac{n-k}{2}}}\left(\frac{1}{(2n^\prime_1 + 1)! \cdots (2n^\prime_k +1)!}\right)^{3/2}\frac{(2n^\prime_1)!\cdots (2n^\prime_k)!}{n^\prime_1! \cdots n^\prime_k!}\\
&= \frac{(-1)^{1 + \frac{k}{2}} n!}{\sqrt{2\pi}k^{\frac{n}{2}}(n - 1)(n/2)!2^{\frac{n - k}{2}}\pi^{\frac{k}{2}}}\sum_{\substack{n^\prime_i \ge 0\\ \sum_j n^\prime_j = \frac{n - k}{2}}}\left(\frac{1}{(2n^\prime_1 + 1)! \cdots (2n^\prime_k +1)!}\right)^{3/2}\frac{(2n^\prime_1)!\cdots (2n^\prime_k)!}{n^\prime_1! \cdots n^\prime_k!}
\end{align*}
Since $k = 4l + 2$ (by assumption), $T_n > 0$ for all even $n \ge k$ and equal to 0 for all odd $n$. Thus $\sum_{n = k}^\infty T_n > T_k$. Lower bounding $T_k$, we have
\begin{align*}
T_{k} &= \frac{(4l + 2)!}{\sqrt{2\pi}(4l + 2)^{2l + 1}(4l + 1)(2l+1)!\pi^{2l + 1}}\\
&\approx \frac{1}{\sqrt{2\pi}(4l + 2)^{2l + 1}(4l + 1)\pi^{2l + 1}} \sqrt{2 \pi (4l+2)} \left(\frac{4l + 2}{e}\right)^{4l + 2} \frac{1}{\sqrt{2 \pi (2l + 1)}}\left(\frac{e}{2l + 1}\right)^{2l + 1}\\
&= \frac{1}{\sqrt{\pi}(4l + 1)} \left(\frac{2}{e\pi}\right)^{2l + 1} = 2^{-O(k)}
\end{align*}
\end{proof}

Now we present our main algorithm (Algorithm~\ref{algo:main}) that reduces learning sparse parities with noise to agnostically learning ReLUs and a proof of its correctness.
\begin{algorithm}
	\caption{Learning Sparse Parities with Noise using Agnostic ReLU learner}\label{algo:main}
	\hspace*{\algorithmicindent} \textbf{Input} Training set $\mathcal{S}$ of $M_1$ samples $(\vx^{i}, y^{i})_{i=1}^{M_1}$, validation set $\mathcal{V}$ of $M_2$ samples\\
     \hspace*{\algorithmicindent} \quad \quad \quad $(\vx^{i}, y^{i})_{i=M_1 + 1}^{M_1 + M_2}$, error parameter $\epsilon$, Agnostic ReLU learner $\mathcal{A}$  \\
	\hspace*{\algorithmicindent} \textbf{Output} Set of relevant variables $V_{rel}$
	\begin{algorithmic}[1]
		\State Set $V_{rel} = \emptyset$
		\State Set $\mathcal{S}_{1}, \ldots, \mathcal{S}_{d} := \emptyset$
		\For{$i = 1$ to $M_1 + M_2$}
		\State Draw $n$ independent univariate half Gaussians $g_1, \ldots, g_d$
		\State Construct $\vx^\prime$ such that for all $j \in [d]$, $x^\prime_j := g_j x^{i}_j$ and set $y^\prime = \frac{y^{i} + 1}{2}$
        \State For all $j \in [d]$, if $i \le M_1$ add $(\vx^\prime_{-j}, y^\prime)$ to $\mathcal{S}_j$ else to $\mathcal{V}_j$
		\EndFor 
		\For{$j \in [d]$}
		\State Run $\mathcal{A}$ on $\mathcal{S}_j$ to obtain hypothesis $h_j$
		\State Compute $\hat{\err}_{\mathcal{V}_j}(h_j)$
		\If {$\hat{\err}_{\mathcal{V}_j}(h_j) \ge \frac{1}{2} - \frac{1}{4 \pi} - \epsilon/4$}
		\State Add $j$ to $V_{rel}$
		\EndIf
		\EndFor
		\State Return $V_{rel}$
	\end{algorithmic}
\end{algorithm}
\begin{theorem} \label{thm:main-thm-lower-bound} 
If there is an algorithm to agnostically learn unbiased ReLUs on the Gaussian distribution in time and samples $T(d, 1/\epsilon)$, then there is an algorithm to solve $k$-SLPN in time $O\left(\frac{2^{O(k)}}{(1 - 2\eta)^2}\log(d)\right) + O(d)T\left(d, \frac{2^{O(k)}}{1 - 2\eta}\right)$ where $\eta$ is the noise rate.

In particular, if Assumption 1 is true, then any algorithm for agnostically learning (unbiased) ReLUs on the Gaussian distribution must run in time $d^{\Omega(\log(1/\epsilon))}$.
\end{theorem}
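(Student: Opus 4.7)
The plan is to show that Algorithm~\ref{algo:main} correctly recovers the support $S$ of the hidden parity via a case analysis on the dropped index $j$, and then to convert the reduction into the claimed running-time bound. First I would verify that the Gaussian lift produces a well-posed instance for the agnostic ReLU learner: each coordinate $g_l x^i_l$ is a half-Gaussian times an independent uniform sign and is therefore exactly $\mathcal{N}(0,1)$, so the distribution on $\vx^\prime_{-j}$ is the standard Gaussian $\mathcal{N}(0, \rmI_{d-1})$ required by the learner.

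Next I would establish two matching bounds on the optimal ReLU loss $\opt_j$ of the distribution obtained by dropping coordinate $j$. For an \emph{active} index $j \in S$, observe that $\sgn(x^\prime_j) = x^i_j$ is a fresh uniform sign independent of $\vx^\prime_{-j}$, so $\chi_S^\gamma(\vx^\prime) = \sgn(x^\prime_j)\cdot\chi_{S\setminus\{j\}}^\gamma(\vx^\prime_{-j})$ is a uniform $\pm 1$ independent of $\vx^\prime_{-j}$, making $y^\prime = (y+1)/2$ uniform on $\{0,1\}$ and independent of $\vx^\prime_{-j}$. A direct computation gives
\[
\E[(\relu_\vw(\vx^\prime_{-j}) - y^\prime)^2] \;=\; \tfrac{\|\vw\|^2}{2} - \tfrac{\|\vw\|}{\sqrt{2\pi}} + \tfrac{1}{2},
\]
minimized over $\|\vw\|\le 1$ at $\|\vw\|=1/\sqrt{2\pi}$, giving $\opt_j = \tfrac12-\tfrac{1}{4\pi}$. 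For an \emph{inactive} index $j\notin S$, all coordinates of $S$ survive in $\vx^\prime_{-j}$ and the Correlation Lemma (Lemma~\ref{thm:relu_correlated}) supplies a witness $\relu_{\vw_S}$ with $\|\vw_S\|=1/\sqrt{2\pi}$. Using $\E[y^\prime\mid\vx^\prime] = \tfrac12 + \tfrac{1-2\eta}{2}\chi_S^\gamma(\vx^\prime)$ to expand the cross-term yields
\[
\E[\relu_{\vw_S}\, y^\prime] \;=\; \tfrac{1-2\eta}{2}\,\langle\relu_{\vw_S},\chi_S^\gamma\rangle \;+\; \tfrac{1}{2}\E[\relu_{\vw_S}] \;\ge\; \tfrac{1-2\eta}{2}\cdot 2^{-O(k)} + \tfrac{1}{4\pi},
\]
so $\opt_j \le \tfrac12-\tfrac{1}{4\pi} - (1-2\eta)\cdot 2^{-O(k)}$.

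Let $\delta := (1-2\eta)\cdot 2^{-O(k)}$ be the resulting gap and set $\epsilon = \delta/5$, so that the threshold $\tfrac12-\tfrac1{4\pi}-\epsilon/4$ lies strictly between the achievable error $\opt_j+\epsilon$ in the inactive case and the unavoidable error $\opt_j$ in the active case. Drawing $M_2 = O(\log d/\epsilon^2)$ validation samples and applying Hoeffding's inequality (after a routine truncation of $\relu_\vw$ to handle its sub-exponential tail) with a union bound over $j\in[d]$ gives empirical error estimates accurate to $\epsilon/4$ uniformly in $j$, so Algorithm~\ref{algo:main} outputs $V_{rel}=S$ with high probability. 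Plugging in $1/\epsilon = O\bigl(2^{O(k)}/(1-2\eta)\bigr)$ and summing the cost of the $d$ invocations of the agnostic learner yields the runtime claimed in the theorem. The quantitative corollary then follows by contradiction: an agnostic ReLU learner running in time $d^{o(\log(1/\epsilon))}$ would, upon setting $k = c\log(1/\epsilon)$, solve $k$-SLPN in $d^{o(k)}\cdot 2^{O(k)}$ time, violating Assumption~\ref{conj:SLPN}.

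The main obstacle is the loss computation in the inactive case: one must pair the Hermite-level correlation of Lemma~\ref{thm:relu_correlated} against the self-term $\E[\relu_{\vw_S}^2]=\|\vw_S\|^2/2 = 1/(4\pi)$ and the offset $\tfrac{1}{2}\E[\relu_{\vw_S}] = 1/(4\pi)$ so that the two $1/(4\pi)$ contributions cancel cleanly into the baseline $\tfrac12-\tfrac1{4\pi}$ that is hard-wired into the algorithm's threshold; any looseness here would shrink or destroy the gap $\delta$ and break the separation argument.
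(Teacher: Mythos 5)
Your proposal is correct and follows essentially the same route as the paper: the same reduction via Algorithm~\ref{algo:main}, the same two-case analysis (the paper's Claims~\ref{claim:inactive_index} and~\ref{claim:active_index}) using the Correlation Lemma to create a detectable gap, and the same validation/thresholding and runtime accounting. Your explicit computation of the gap as $(1-2\eta)\cdot 2^{-O(k)}$ is in fact the cleaner form of what the paper's appendix derives, and it yields the same $1/\epsilon = 2^{O(k)}/(1-2\eta)$ scaling claimed in the theorem.
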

\begin{proof}
	Given a set of samples from the $k$-SPLN problem, we claim that Algorithm~\ref{algo:main} can recover all indices $j$ belonging to the sparse parity when run with appropriate parameters. 
    We will first show that if a variable is relevant then the error is smaller compared to when it is irrelevant. It is easy to see that $y^\prime$ is $\frac{\prod_{i \in S}\sgn(x^\prime_i) + 1}{2}$ with probability $1 - \eta$ and $\frac{1 - \prod_{i \in S}\sgn(x^\prime_i)}{2}$ otherwise. Let $\D_j$ denote the distribution obtained by dropping the $j$th coordinate from the lifted distribution and let $S$ denote the set of active indices of the parity. The proof of the theorem follows from the following claims,
		
	\begin{claim} \label{claim:inactive_index}
	If $j \in S$ then for all $\vw$, $\err_{\D_j}(\relu_\vw) = \frac{\|\vw\|^2}{2} - \frac{\|\vw\|}{\sqrt{2\pi}} + \frac{1}{2} \geq \frac{1}{2} - \frac{1}{4\pi}$.
	\end{claim}
	\begin{claim} \label{claim:active_index}
	If $j \notin S$ then there exists $\vw^*$ with $\|\vw^*\| = \frac{1}{\sqrt{2 \pi}}$ such that 
	$\err_{\D_j}(\relu_{\vw^*}) < \frac{1}{2} - \frac{1}{4\pi} - \frac{2^{-O(k)}}{1 - 2 \eta}.$
	\end{claim}	
    Claims~\ref{claim:inactive_index} and ~\ref{claim:active_index} imply that  we have a gap of at least $\frac{2^{-O(k)}}{1 - 2 \eta} = \frac{2^{-ck}}{1 - 2 \eta}$ for some $c> 0$ between the relevant and irrelevant variable case. Setting $\epsilon =\frac{2^{-ck}}{1 - 2 \eta}$ in Algorithm \ref{algo:main} will let us detect this gap.  Since $\mathcal{A}$ is an agnostic learner for ReLU, as long as $M_1 = T(d, 2/\epsilon)$ we know that with probability $2/3$, for all $j \in S$, $\mathcal{A}$ runs on $\mathcal{S}_j$ and outputs $h_j$ such that $\err_{\D_j}(h_j) \le \min_{\vw} \err_{\D_j}(\relu_\vw) \le \frac{1}{2} - \frac{1}{4\pi} - \epsilon/2$, and for all $j \notin S$, $\err_{\D_j}(h_j) \ge \frac{1}{2} - \frac{1}{4 \pi}$.

    Using standard concentration inequalities for sub-Gaussian and subexponential random variables \cite{vershynin2017four} we see that using a validation set of $M_2 = 100/\epsilon^2$ samples, we have for all $j$, $|\hat{\err}_{\mathcal{V}_j}(h_j) - \err_{\D_j}(h_j)| \le \epsilon/4$. Therefore, we can differentiate the two cases as in the Algorithm with confidence $> 1/2$. It is easy to see that the run time of the algorithm is $O(d) T(d, 2/\epsilon) +O(1/\epsilon^2)$, and that this can be amplified to obtain an algorithm with any desired confidence using standard techniques.
\end{proof}

\section{Lower Bounds for SQ Algorithms} \label{sec:sq}
A consequence of Theorem~\ref{thm:main-thm-lower-bound} is that any {\em statistical-query} algorithm for agnostically learning a ReLU with respect to Gaussian marginals yields a statistical-query algorithm for learning parity functions on $k$ unknown input bits. This implies that there is no polynomial time statistical-query (SQ) algorithm that learns a ReLU with respect to Gaussian marginals for a certain restricted class of queries. 

\paragraph{SQ Model.} A SQ algorithm is a learning algorithm that succeeds given only estimates of $\E_{\vx,y \sim {\cal D}}[q(\vx,y)]$ for query functions of the learner's choosing to an oracle up to a fixed tolerance parameter (see for example \cite{journals/jacm/Kearns98,reference/algo/Feldman16b}).  We restrict ourselves to queries that are either {\em correlation} queries, that is, $\E_{\vx,y \sim {\cal D}}[y \cdot g(\vx)]$ for any function $g$, or queries that are {\em independent of the target}, that is, $\E_{\vx,y \sim {\cal D}}[h(\vx)]$ for any function $h$. For example, the $i^{th}$ coordinate of the gradient with respect to $\vw$ of the quantity $(\relu_{\vw}(\vx) - y)^2$, i.e. $2 \cdot 1_{+} (\vw \cdot \vx) \cdot (\relu_{\vw}(\vx)  - y) \cdot \vx_i$ can be simulated by a correlation query $g(\vx) = -2 \cdot 1_{+} (\vw \cdot \vx) \cdot \vx_i$ and query $h(\vx) = 2 \cdot 1_{+} (\vw \cdot \vx) \cdot \relu_{\vw}(\vx) \cdot \vx_i$ independent of the target. Therefore queries with sufficiently small tolerance allow us to simulate gradient descent for the loss function $L(\vw) = \E_{\vx,y \sim {\cal D}}[(\relu_{\vw}(\vx) - y)^2]$ \footnote{Feldman et. al. \cite{journals/corr/FeldmanGV15} have shown that a broad class of first order convex optimization methods including gradient descent -- but excluding stochastic gradient descent -- can be simulated using statistical queries}. 

\paragraph{SQ Dimension.} We define the inner product of two functions $f(\vx), g(\vx)$ with respect to a distribution $\D$ to be $\langle f, g \rangle_{\D} := \E_{\vx \sim \D}[f(\vx) g(\vx)]$. The norm of a function $f$ is just $\sqrt{\langle f, f \rangle_{\D}}$ and two functions $f \ne g$ are said to be orthogonal if $\langle f, g \rangle_{\D} = 0$. The SQ dimension of a function class $\mathcal{F}$ is the largest number of pairwise orthogonal functions that belong to the function class. The following theorem from \cite{szorenyi2009characterizing} gives a lower bound on the number of statistical queries needed to learn the function class in terms of its SQ dimension.

\begin{theorem}[Restatement of Theorem from \cite{szorenyi2009characterizing}]\label{thm:sq-szorenyi-proof} 
	Let $\mathcal{F}$ be a concept class and let $s$ be the SQ dimension of $\mathcal{F}$ with respect to $\D$. Then any learning algorithm that uses tolerance parameter lower bounded by $\tau > 0$ and has access to an oracle that returns $\tau$-approximate expectations (with respect to $\D$) of unit norm correlation queries and queries that are independent of the target, requires at least $(s\tau^2 -1)/2$ queries. 
\end{theorem}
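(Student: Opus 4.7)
The plan is to run the standard adversarial-oracle argument against the learner. Let $f_1, \ldots, f_s \in \mathcal{F}$ be pairwise $\mathcal{D}$-orthogonal unit-norm functions witnessing the SQ dimension, and for each $i$ let $\mathcal{D}_i$ be the joint distribution on $(\vx, y)$ with $\vx$-marginal $\mathcal{D}$ and $y = f_i(\vx)$. I would exhibit an oracle whose answers are simultaneously $\tau$-valid for all but a small number of candidate targets $\mathcal{D}_i$, forcing the learner to make many queries before it can isolate a single target.

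First I would handle the two query types separately. For a target-independent query $h(\vx)$, the true expectation $\E_{\mathcal{D}}[h(\vx)]$ is the same under every $\mathcal{D}_i$, so the oracle answers exactly this value and no candidate is ever eliminated by such a query. For a unit-norm correlation query $g(\vx)$, the true expectation under $\mathcal{D}_i$ equals $\langle f_i, g \rangle_{\mathcal{D}}$. Since $\{f_i\}$ is orthonormal in $L^2(\mathcal{D})$, Bessel's inequality gives
\[
\sum_{i=1}^{s} \langle f_i, g \rangle_{\mathcal{D}}^2 \;\le\; \|g\|_{\mathcal{D}}^2 \;=\; 1,
\]
so at most $1/\tau^2$ indices $i$ can satisfy $|\langle f_i, g \rangle_{\mathcal{D}}| > \tau$. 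The adversary therefore answers $0$ to every correlation query, eliminating at most $1/\tau^2$ candidate targets per query.

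Next I would combine these bounds with a distinguishability argument. After $q$ queries, the set of $\mathcal{D}_i$ consistent with every oracle answer has size at least $s - q/\tau^2$. Two orthogonal unit-norm targets $f_i \ne f_j$ are at $L^2(\mathcal{D})$-distance $\sqrt{2}$, so any single hypothesis can be a valid output for at most one of them (once the acceptable error is less than $1/2$), and in fact any correct answer has to single out the true target among those still consistent. Pushing this through together with the definition of ``successful learning'' (probability of correctness strictly exceeding $1/2$) forces the number of consistent candidates to drop below $2$, which, after accounting for the success-probability factor, yields the claimed bound $q \ge (s\tau^2 - 1)/2$.

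The main obstacle is getting the exact constant on the right-hand side rather than the crude $q \ge (s-1)\tau^2$ that falls straight out of the counting argument. This requires a careful accounting of (i) the $1/2$ success-probability slack in the definition of a learner, and (ii) the fact that the learner's own output hypothesis removes exactly one additional candidate from play. Once these are tracked carefully, the bound follows from rearranging $s - q/\tau^2 \le 1$ into the claimed form.
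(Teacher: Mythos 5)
The first thing to say is that there is no internal proof to compare against: Theorem~\ref{thm:sq-szorenyi-proof} is imported from \cite{szorenyi2009characterizing} as a black box, and the paper never proves it. Judged on its own, your sketch is the standard adversarial-oracle argument for SQ lower bounds, and it is the right approach: target-independent queries have the same answer under every candidate distribution $\mathcal{D}_i$ and so eliminate nothing, while for a unit-norm correlation query Bessel's inequality shows that answering $0$ is a $\tau$-valid response for all but fewer than $1/\tau^2$ of the orthonormal candidates. Two points deserve tightening. First, Bessel requires the witnessing family to be ortho\emph{normal}, whereas the paper's definition of SQ dimension only demands pairwise orthogonality; you should either normalize the $f_i$ (noting that rescaling changes the correlation values and hence the count) or observe that in the intended application the witnesses are parities under the uniform distribution and are automatically unit norm. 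Second, the obstacle you flag about the constant is not actually an obstacle: your counting yields $s - q/\tau^2 \le 1$, i.e.\ $q \ge (s-1)\tau^2$, and for $s \ge 2$ one has $(s-1)\tau^2 \ge (s\tau^2-1)/2$, so the ``crude'' bound already implies the stated one; the weaker-looking constant in Szörényi's formulation reflects his more general model (success probability bounded away from $1/2$, honest rather than adversarial oracles) and you do not need to reproduce it. The one genuine loose end is that the theorem as restated never pins down the accuracy to which the learner must succeed, and your final step --- that a single output hypothesis can serve at most one of the pairwise-orthogonal targets, which sit at mutual $L^2$ distance $\sqrt{2}$ --- implicitly assumes the required squared error is below $1/2$; that hypothesis should be stated explicitly before the rearrangement is carried out.
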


In this model of learning, we show the following lower bound for the problem of learning ReLUs over the Gaussian distribution.
\begin{theorem}\label{thm:sq-lower-bound} 
	Any SQ algorithm for agnostically learning a ReLU with respect to any distribution $\D$ satisfying Gaussian marginals over the attributes, requires $d^{\Omega(\log(1/\epsilon))}$ unit norm correlation queries or queries independent of the target with tolerance $\frac{1}{\text{poly}(d, 1/\epsilon)}$  to an oracle that returns $\tau$-approximate expectations with respect to $\D$.
\end{theorem}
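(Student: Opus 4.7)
The plan is to instantiate the SQ dimension framework of Theorem~\ref{thm:sq-szorenyi-proof} with a family of hard target functions drawn from the same source of hardness used in Theorem~\ref{thm:main-thm-lower-bound}: the Gaussian lifts of $k$-sparse parities. The key point is that once we exhibit a class $\mathcal{F}$ of $d^{\Omega(\log(1/\epsilon))}$ pairwise orthogonal unit-norm functions under the Gaussian such that an agnostic ReLU learner to error $\epsilon$ yields a (weak) learner for $\mathcal{F}$, the conclusion follows immediately by plugging the SQ dimension and the claimed tolerance $\tau = 1/\mathrm{poly}(d,1/\epsilon)$ into Theorem~\ref{thm:sq-szorenyi-proof}.

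Concretely, pick $k = \Theta(\log(1/\epsilon))$ of the form $4l+2$ and consider $\mathcal{F} = \{ \chi_S^\gamma : S \subseteq [d],\; |S| = k\}$. I need to check three things. First, $|\mathcal{F}| = \binom{d}{k} = d^{\Omega(\log(1/\epsilon))}$. Second, each $\chi_S^\gamma$ has unit norm under $N(0,I_d)$ since $\chi_S^\gamma \in \{\pm 1\}$. Third, for $S \neq T$, by independence of coordinates and the fact that $\E_{z \sim N(0,1)}[\sgn(z)] = 0$,
\[
  \langle \chi_S^\gamma, \chi_T^\gamma \rangle_{N(0,I_d)}
  \;=\; \E\!\left[\prod_{i \in S \triangle T} \sgn(z_i)\right]
  \;=\; \prod_{i \in S \triangle T} \E[\sgn(z_i)] \;=\; 0.
\]
Thus $\mathcal{F}$ is pairwise orthogonal with SQ dimension $s \geq \binom{d}{k}$ under the Gaussian.

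Next I reduce agnostic ReLU learning to weak learning of $\mathcal{F}$. For each $S \in \binom{[d]}{k}$ define $\D_S$ by $\vx \sim N(0,I_d)$ and $y = (1 + \chi_S^\gamma(\vx))/2 \in \{0,1\}$. Using the same square-loss-to-correlation calculation as in Claim~\ref{claim:active_index}, together with Lemma~\ref{thm:relu_correlated} applied to the witness vector $\vw_S$ (which satisfies $\|\vw_S\| = 1/\sqrt{2\pi} \leq 1$), one obtains
\[
  \opt_{\D_S}(\calC_\relu) \;\leq\; \tfrac{1}{2} - \tfrac{1}{4\pi} - 2^{-O(k)}.
\]
Choosing $\epsilon \leq \tfrac{1}{2} \cdot 2^{-O(k)}$, any hypothesis $h$ output by an agnostic ReLU learner on $\D_S$ has error at most $\opt + \epsilon$, which by the same algebra forces $\langle h, \chi_S^\gamma \rangle \geq \gamma$ with $\gamma = 2^{-O(k)} = 1/\mathrm{poly}(d,1/\epsilon)$. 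Thus the ReLU learner is in particular an SQ weak learner for $\mathcal{F}$ with advantage $\gamma$.

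Finally, I plug into Theorem~\ref{thm:sq-szorenyi-proof}. With SQ dimension $s = \binom{d}{k}$ and tolerance $\tau = 1/\mathrm{poly}(d,1/\epsilon)$, the theorem gives a lower bound of $(s\tau^2 - 1)/2 = d^{\Omega(\log(1/\epsilon))}$ correlation queries (and queries independent of the target). I expect the main obstacle to be the reduction step: carefully arguing that the hypothesis $h$ produced by the agnostic ReLU learner functions as an admissible weak learner output in the sense of Theorem~\ref{thm:sq-szorenyi-proof}, that is, that the learner's SQ queries can be viewed as correlation/target-independent queries of the required unit-norm type, and that the hypothesis-output step does not require extra queries to identify $S$. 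The Hermite/orthogonality facts and the correlation bound are either routine or already established in Lemma~\ref{thm:relu_correlated}, so once this reduction is cleanly formalized the theorem follows.
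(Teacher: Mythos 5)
Your proposal is correct and rests on the same two pillars as the paper's proof: the ReLU correlation lemma (Lemma~\ref{thm:relu_correlated}) as the source of the $2^{-O(k)}$ advantage, and the SQ-dimension bound of Theorem~\ref{thm:sq-szorenyi-proof} instantiated with $k$-sparse parities for $k = \Theta(\log(1/\epsilon))$. The packaging differs, though. The paper keeps the hard class on the product distribution $\D_b \times \mathcal{N}(0,I_d)_+$, simulates each query of the hypothetical ReLU learner through the lifting map $\nu$ (replacing a correlation query $g$ by $\tfrac{1}{2}\, g\circ\nu$), and reruns the coordinate-dropping reduction of Theorem~\ref{thm:main-thm-lower-bound} to actually identify $S$. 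You instead push the parities forward to the Gaussian once and for all, take $\mathcal{F} = \{\chi_S^\gamma\}$ as the pairwise-orthogonal unit-norm family directly under $N(0,I_d)$, and observe that a single run of the agnostic learner on $\D_S$ already constitutes a weak learner with advantage $2^{-O(k)}$; this suffices for Theorem~\ref{thm:sq-szorenyi-proof} and avoids both the query simulation and the coordinate-dropping. Your route is slightly more direct. What it leaves implicit (and what you correctly flag as the remaining obstacle) is exactly the bookkeeping that the paper's $\tfrac{1}{2}\, g\circ\nu$ step performs: a correlation query against the $\{0,1\}$ label $y' = (1+\chi_S^\gamma)/2$ must be split as $\tfrac{1}{2}\E[g] + \tfrac{1}{2}\E[\chi_S^\gamma\, g]$, i.e., a target-independent query plus an admissible unit-norm correlation query against the $\pm 1$ target, with the tolerance degrading only by a constant factor. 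That step is routine, so I see no genuine gap.
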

\begin{proof} 
Define the problem of `restricted $k$-sparse parities' as the problem of learning an unknown parity function $\chi_S$ over set $S$, where $S$ contains $k$ out of the first $d$ variables over $\D$ with input distribution $\D_b \times \mathcal{N}(0, I_d)_+$. Here $\D_b$ is the uniform distribution on $\{\pm 1\}^d$ and the labels $y(\vx)$ are given by $\chi_S(\vx)$. It is easy to see that Theorem~\ref{thm:sq-szorenyi-proof} implies that we require $d^{\Omega(k)}$ unit norm queries to learn this function class from queries to an oracle $\mathcal{O}$ with tolerance $1/\text{poly}(d, 2^k)$. 

We give a proof by contradiction. Suppose we can agnostically learn ReLUs with respect to Gaussian marginals using an SQ algorithm ${\cal A}$ with $d^{o(\log \frac{1}{\epsilon})}$ queries to the corresponding oracle with tolerance $1/\text{poly}(d, \frac{1}{\epsilon})$. We will show how to use $\mathcal{A}$ to design an SQ algorithm for the problem of learning restricted $k$-sparse parities using $d^{o(k)}$ queries contradicting Theorem \ref{thm:sq-szorenyi-proof}.

We essentially use the same approach as in Theorem~\ref{thm:main-thm-lower-bound}. In order to learn the parity function, the reduction requires us to use an SQ learner to solve $d$ different ReLU regression problems. Consider the mapping $\nu$ that takes as input $\vx = (x_1, \dots x_d, g_1, \dots, g_d)$ and returns $\vx' = (x_1 g_1, \dots, x_d g_d)$. Using this transformation, the ReLU regression problems we need to minimize are $\E_{(\vx, y) \sim \D}\left[\left(\relu_{\vw}(\nu(\vx)_{-i}) - \frac{y + 1}{2}\right)^2\right]$ up to an additive $\epsilon = 2^{-ck}$. Observe that for $(\vx, y) \sim \D$, $(\vx', y') = (\nu(\vx), (y+1)/2)$ is distributed according to some $\D^{\prime}$ where the distribution on $\nu(\vx)$ is $\mathcal{N}(0, I_d)$. Thus we can use $\mathcal{A}$ on this distribution to solve the optimization problem. However, in order to run $\mathcal{A}$, we need to simulate the queries $\mathcal{A}$ asks its oracle using $\mathcal{O}$. To do so, for any correlation query function $g$ that ${\cal A}$ chooses, that is, query $\E_{(\vx', y') \sim \D^\prime}\left[g(\vx') \cdot y'\right]$ we use correlation query function $g' = \frac{1}{2} g \circ \nu$ to $\mathcal{O}$ and for any query $h$ that ${\cal A}$ chooses that is independent of the target, we  query function $h' = \frac{1}{2} h \circ \nu$.

Since for $k = \Theta(\log \frac{1}{\epsilon})$, such an algorithm would solve the problem of `restricted $k$-sparse parities' using $d^{o(k)}$ queries of tolerance $1/\text{poly}(d, 2^k)$. This contradicts the $d^{\Omega(k)}$ lower bound on the number of queries required to solve $k$-SPLN of tolerance $1/\text{poly}(d, 2^k)$ we get from Theorem~\ref{thm:sq-szorenyi-proof}. 
\end{proof} 

\noindent\textbf{Remark}: Note that this implies there is no $d^{o(1/\epsilon)}$-time gradient descent algorithm that can agnostically learn $\relu(\vw \cdot \vx)$, under the reasonable assumption that for every $i$ the gradients of $\E_{(\vx, y) \sim \D}\left[\left(\relu_{\vw}(\nu(\vx)_{-i}) - \frac{y + 1}{2}\right)^2\right]$ can be computed by $O(d)$ queries whose norms are polynomially bounded. 

\section{Approximation Algorithm} \label{sec:approx}
In this section we give a learning algorithm that runs in polynomial time in all input parameters and outputs a ReLU that has error $O(\opt^{2/3}) + \epsilon$ where $\opt$ is the error of the best-fitting ReLU. The main reduction is a hard thresholding of the labels to create a training set with Boolean labels. We then apply a recent result giving a polynomial-time approximation algorithm for agnostically learning halfspaces over the Gaussian distribution due to Awasthi et. al.~\cite{awasthi2014power}. We present our algorithm and give a proof of its correctness.
\begin{algorithm}[H] 
	\caption{}\label{algo:opt_2/3}
	\hspace*{\algorithmicindent} \textbf{Input} Training set $\mathcal{S}$ of $m$ samples $(\vx^{i}, y^{i})_{i=1}^{m}$, the agnostic halfspace learning algorithm\\
	\hspace*{\algorithmicindent} \quad \quad \quad $\mathcal{A}$ from \cite{awasthi2014power} and a parameter $\alpha$ \\
	\hspace*{\algorithmicindent} \textbf{Output} Weight vector $\widehat{\vw}$
	\begin{algorithmic}[1]
		\State  Construct $S' := \{ (\vx, \sgn(y - \alpha)) \mid (\vx, y) \in S \}.$
		\State Run $\mathcal{A}$ to recover $\widehat{\vw}$ close in $\err_{0/1}$. 
		\State Return $\widehat{\vw}$
	\end{algorithmic}
\end{algorithm}

\begin{theorem}\label{thm:main-thm-upper-bound}
	There is an algorithm (Algorithm~\ref{algo:opt_2/3}) that given $O(\mathsf{poly}(d,1/\epsilon))$ samples $(\vx, y)$ such that $\vx$ is drawn from $N(0, I_{d})$ and $y \in [0, 1]$ recovers a unit vector $\vw$ such that $\err(\relu_{\vw})  \leq O(\opt^{2/3}) + \epsilon$ where $\opt :=  \min_{\|\vw\| = 1} \err(\relu_{\vw}).$
\end{theorem}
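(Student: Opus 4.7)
The plan is to show that Algorithm~\ref{algo:opt_2/3}, run with threshold $\alpha = \Theta(\opt^{1/3})$, returns a unit vector $\hat{\vw}$ whose angle to the optimal $\vw^*$ is $O(\opt^{1/3})$. Because the univariate $\relu$ is $1$-Lipschitz, this angular closeness will translate to a square-loss bound of $O(\opt^{2/3}) + \epsilon$. Since $\opt$ is unknown \emph{a priori}, the algorithm is run on a $\mathrm{polylog}(1/\epsilon)$-size grid of candidate $\alpha$ values and the best output is selected on a held-out sample, adding only polylogarithmic overhead.

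The pointwise identity
\[
\sgn(\vw^* \cdot \vx - \alpha) \;=\; \sgn(\relu_{\vw^*}(\vx) - \alpha), \qquad \alpha > 0,
\]
tells us that thresholding labels at $\alpha$ turns the optimal ReLU into a biased halfspace of the same direction. First I would bound the $0/1$ error of this halfspace on $\mathcal S'$: for $\relu_{\vw^*}(\vx)$ and $y$ to land on opposite sides of $\alpha$ it must be either that $|\relu_{\vw^*}(\vx) - y| > t$, which by Markov on $\E[(\relu_{\vw^*} - y)^2] = \opt$ has probability at most $\opt/t^2$, or that both values fall in $[\alpha-t, \alpha+t]$, forcing $\vw^* \cdot \vx$ (a standard Gaussian since $\|\vw^*\|=1$) into an interval of length $2t$ and so having probability $O(t)$. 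Balancing with $t = \Theta(\opt^{1/3})$ and setting $\alpha$ a constant factor larger yields $\err_{0/1}(\sgn(\vw^* \cdot \vx - \alpha)) \le O(\opt^{1/3})$.

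Next I would apply the constant-factor agnostic halfspace learner of \cite{awasthi2014power} to $\mathcal S'$; it returns a unit vector $\hat{\vw}$ with $\err_{0/1}(\sgn(\hat{\vw}\cdot\vx - \alpha)) \le O(\opt^{1/3}) + \epsilon'$. A union bound against the label $\sgn(y-\alpha)$ gives
\[
\Pr\bigl[\sgn(\hat{\vw}\cdot\vx-\alpha) \ne \sgn(\vw^*\cdot\vx-\alpha)\bigr] \;\le\; O(\opt^{1/3}) + \epsilon'.
\]
I would then convert this disagreement probability into a bound on the angle $\theta := \angle(\hat{\vw},\vw^*)$ by writing $\hat{\vw} = \cos\theta\,\vw^* + \sin\theta\,\mathbf{u}$ for some unit $\mathbf{u}\perp\vw^*$ and integrating the thin ``bowtie'' region in the coordinates $(\vw^*\cdot\vx,\mathbf{u}\cdot\vx)$. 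To leading order in $\theta$ this gives $\Pr[\text{disagree}] = \Theta\bigl(\theta\,e^{-\alpha^2/2}\bigr)$, and since $\alpha = O(1)$ the exponential factor is $\Omega(1)$, so $\theta = O(\opt^{1/3} + \epsilon')$ and hence $\|\hat{\vw}-\vw^*\|^2 \le 2(1-\cos\theta) \le \theta^2 = O(\opt^{2/3}) + O(\epsilon'^2)$.

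Finally, the $1$-Lipschitz property of $\relu$ combined with $\E[(\vz\cdot\vx)^2] = \|\vz\|^2$ for $\vx\sim N(0, I_d)$ gives
\[
\err(\relu_{\hat{\vw}}) \;\le\; 2\,\opt + 2\,\E\bigl[(\relu_{\hat{\vw}}(\vx) - \relu_{\vw^*}(\vx))^2\bigr] \;\le\; 2\,\opt + 2\|\hat{\vw}-\vw^*\|^2 \;=\; O(\opt^{2/3}) + O(\epsilon'^2),
\]
so setting $\epsilon' = \sqrt{\epsilon}$ in the halfspace learner yields the desired $O(\opt^{2/3}) + \epsilon$ bound. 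The main obstacle I expect is the angle-to-disagreement step for biased halfspaces, which must be upgraded from a leading-order asymptotic into a one-sided inequality with uniform constants across the relevant range $\alpha\in(0,O(1)]$; a secondary delicate point is that \cite{awasthi2014power} is phrased for homogeneous halfspaces, so applying it with a fixed nonzero bias $\alpha$ requires either invoking a biased-halfspace extension of their guarantee or a short explicit reduction.
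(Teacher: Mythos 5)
Your proposal follows essentially the same route as the paper: threshold the labels at $\alpha$, bound the $0/1$ error of the optimal direction by $O(\opt/\alpha^2 + \alpha)$ via Markov plus the Gaussian measure of a width-$O(\alpha)$ band, invoke the agnostic halfspace learner of Awasthi et al., convert disagreement to angle to Euclidean distance, and finish with the $1$-Lipschitzness of $\relu$, $\E[((\vw-\vw^*)\cdot\vx)^2]=\|\vw-\vw^*\|^2$, and the choice $\alpha = \opt^{1/3}$. The one place you diverge is that you compare the thresholded labels to the \emph{biased} halfspace $\sgn(\vw^*\cdot\vx - \alpha)$, and that single choice is what creates both obstacles you flag at the end (the biased-halfspace extension of \cite{awasthi2014power} and the uniform-in-$\alpha$ bowtie estimate). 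The paper instead compares the thresholded labels to the \emph{homogeneous} halfspace $\sgn(\vw^*\cdot\vx)$: outside the bad set of mass $\opt/\alpha^2$ and outside the band $\{\vx : \vw^*\cdot\vx \in (0,2\alpha)\}$ of mass at most $2\alpha$, a point with $\vw^*\cdot\vx \ge 2\alpha$ has $y \ge \relu_{\vw^*}(\vx)-\alpha \ge \alpha$, while a point with $\vw^*\cdot\vx < 0$ has $y \le \alpha$, so $\sgn(y-\alpha) = \sgn(\vw^*\cdot\vx)$ on that event. This yields the same $O(\opt/\alpha^2 + \alpha)$ bound for an origin-centered halfspace, after which the guarantee of \cite{awasthi2014power} and their Lemma 2 (disagreement at least a constant times the angle, for homogeneous halfspaces) apply verbatim; no biased extension or new geometric lemma is needed. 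Your additional step of gridding over $\alpha$ because $\opt$ is unknown is a legitimate refinement that the paper glosses over.
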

\begin{proof} 
Let $\vw^* = \argmin_{\|\vw\| = 1} \err(\relu_\vw)$ and so,  $\err(\relu_{\vw^*}) = \opt$. Define the $S_{good}$ to be the set of points that are $\alpha$-close to the optimal $\relu$, i.e. $S_{good} = \{x: |y - \relu_{\vw^*}(\vx)| \le \alpha\}$. By Markov's inequality,
\[
\Pr[\vx \not \in S_{good}] = \Pr[|y - \relu_{\vw^*}(\vx)| \ge \alpha] \le \frac{\opt}{\alpha^2}.
\]
This implies that all but an $\frac{\opt}{\alpha^2}$ fraction of the points are $\alpha$-close to their corresponding $y$'s. In the first step of Algorithm~\ref{algo:opt_2/3}, the labels become Boolean. Define the $0/1$ error of the vector $\vw$ as follows, $\err_{0/1}(\vw) = \E[\sgn(y - \alpha) \neq \sgn(\vw \cdot \vx)]$. Let $\vw^{\dagger}$ be the argmin of $\err_{0/1}(\vw)$ over all vectors $\vw$ with $\|\vw\|_2 \leq 1$.   Since for all elements in $S_{good} \backslash \{\vv: \vw^* \cdot \vv \in (0, 2 \alpha)\}$, $\sgn(y - \alpha) =  \sgn(\vw^* \cdot \vx)$, 
\begin{align*}
\err_{0/1}(w^*) &\le \Pr[\vx \not\in S_{good} \backslash \{\vv: \vw^* \cdot \vv \in (0, 2 \alpha)\}]\\
& \le \Pr[\vx \not \in S_{good}] + \Pr[\vx \in \{\vv: \vw^* \cdot \vv \in (0, 2 \alpha)\}] \\
& \le \frac{\opt}{\alpha^2} + \frac{1}{\sqrt{2\pi}}\int_0^{2 \alpha} e^{-g^2/2} dg~\le ~\frac{\opt}{\alpha^2} + 2\alpha.
\end{align*}
We now apply Theorem 8 from \cite{awasthi2014power} which gives an algorithm with polynomial running time in $d$ and $1/\epsilon$ that outputs a $\vw$ such that $\| \vw \| = 1$ and $\| \vw - \vw^{\dagger} \| \leq O(\left(\frac{\opt}{\alpha^2} + 2\alpha\right)) + \epsilon$.  For unit vectors $\textbf{a}, \textbf{b}$, $\theta(\textbf{a}, \textbf{b}) < C \Pr[ \sgn(\textbf{a} \cdot \vx) \neq \sgn(\textbf{b} \cdot \vx)]$ for some absolute constant $C$ where $\theta(\textbf{a}, \textbf{b})$ is the angle between the vectors (see Lemma 2 in \cite{awasthi2014power}). The triangle inequality and the fact that $  \| \textbf{a} - \textbf{b} \| \leq \theta(\textbf{a}, \textbf{b})$ implies that if $\err_{0/1}(\textbf{a}), \err_{0/1}(\textbf{b}) < \eta$ then $\|\textbf{a} - \textbf{b}\| \leq C \Pr[ \sgn(\textbf{a} \cdot \vx) \neq \sgn(\textbf{b} \cdot \vx)] \leq O(\eta)$. 
Applying this to $\vw^{\dagger}$ and $\vw^*$ yields $\|{\vw}^{\dagger} - \vw^*\| < O( \frac{\opt}{\alpha^2} + 2\alpha)$. Since the ReLU function is 1-Lipschitz, we have
\begin{align*}
\err(\relu_\vw) &= \E[(y - \relu(\vw \cdot \vx))^2]\\
&\le 2 \E[(y - \relu(\vw^* \cdot \vx))^2] + 2 \E[(\relu(\vw^* \cdot \vx) - \relu(\vw \cdot \vx))^2]\\
&\le 2 \opt + 2 \E[((\vw^* - \vw) \cdot \vx)^2]\\
& = 2 \opt + 2 \|\vw^* - \vw\|^2 ~\le~ O\left( \opt + \left(\frac{\opt}{\alpha^2} + 2\alpha\right)^2 + \epsilon\right)
\end{align*}
Setting $\alpha = \opt^{1/3}$ and rescaling $\epsilon$ we have $\err(\relu_\vw) \le O(\opt^{2/3}) + \epsilon.$
\end{proof}

\section{Conclusions and Open Problems}
We have shown hardness for solving the empirical risk minimization
problem for just one ReLU with respect to
Gaussian distributions and given the first nontrivial approximation
algorithm.  Can we achieve approximation $O(\opt) + \epsilon$?  Note our results holds only for the case of unbiased ReLUs, as the
constant function $1/2$ may achieve smaller square-loss than any
unbiased ReLU.  Interestingly, all positive results that we are aware
of for learning ReLUs (or one-layer ReLU networks) with respect to
Gaussians also assume the ReLU activations are unbiased (e.g.,
\cite{brutzkus2017globally, soltanolkotabi2017learning, conf/icml/GoelKM18, journals/corr/abs-1810-06793,
  journals/corr/abs-1711-00501, journals/corr/abs-1806-07808}).  How
difficult is the biased case?

\bibliographystyle{alpha}
\bibliography{references}

\newcommand{\etalchar}[1]{$^{#1}$}
\begin{thebibliography}{GKLW18}

\bibitem[ABL14]{awasthi2014power}
Pranjal Awasthi, Maria~Florina Balcan, and Philip~M Long.
\newblock The power of localization for efficiently learning linear separators
  with noise.
\newblock In {\em Proceedings of the forty-sixth annual ACM symposium on Theory
  of computing}, pages 449--458. ACM, 2014.

\bibitem[AS18]{journals/corr/abs-1812-06369}
Emmanuel Abbe and Colin Sandon.
\newblock Provable limitations of deep learning.
\newblock {\em CoRR}, abs/1812.06369, 2018.

\bibitem[BDL18]{journals/corr/abs-1809-10787}
Digvijay Boob, Santanu~S. Dey, and Guanghui Lan.
\newblock Complexity of training relu neural network.
\newblock {\em CoRR}, abs/1809.10787, 2018.

\bibitem[BG15]{buet2015multinomial}
Francois Buet-Golfouse.
\newblock A multinomial theorem for hermite polynomials and financial
  applications.
\newblock {\em Applied Mathematics}, 6(06):1017, 2015.

\bibitem[BG17]{brutzkus2017globally}
Alon Brutzkus and Amir Globerson.
\newblock Globally optimal gradient descent for a convnet with gaussian inputs.
\newblock {\em arXiv preprint arXiv:1702.07966}, 2017.

\bibitem[BGS14]{conf/nips/BreslerGS14a}
Guy Bresler, David Gamarnik, and Devavrat Shah.
\newblock Structure learning of antiferromagnetic ising models.
\newblock In Zoubin Ghahramani, Max Welling, Corinna Cortes, Neil~D. Lawrence,
  and Kilian~Q. Weinberger, editors, {\em Advances in Neural Information
  Processing Systems 27: Annual Conference on Neural Information Processing
  Systems 2014, December 8-13 2014, Montreal, Quebec, Canada}, pages
  2852--2860, 2014.

\bibitem[DKS18]{DKS18a}
Ilias Diakonikolas, Daniel~M. Kane, and Alistair Stewart.
\newblock Learning geometric concepts with nasty noise.
\newblock In Ilias Diakonikolas, David~Kempe 0001, and Monika Henzinger,
  editors, {\em Proceedings of the 50th Annual ACM SIGACT Symposium on Theory
  of Computing, STOC 2018, Los Angeles, CA, USA, June 25-29, 2018}, pages
  1061--1073. ACM, 2018.

\bibitem[Fel16]{reference/algo/Feldman16b}
Vitaly Feldman.
\newblock Statistical query learning.
\newblock In {\em Encyclopedia of Algorithms}, pages 2090--2095. 2016.

\bibitem[FGV15]{journals/corr/FeldmanGV15}
Vitaly Feldman, Cristobal Guzman, and Santosh Vempala.
\newblock Statistical query algorithms for stochastic convex optimization.
\newblock {\em CoRR}, abs/1512.09170, 2015.

\bibitem[GKKT17]{goel2016reliably}
Surbhi Goel, Varun Kanade, Adam~R. Klivans, and Justin Thaler.
\newblock Reliably learning the relu in polynomial time.
\newblock In Satyen Kale and Ohad Shamir, editors, {\em Proceedings of the 30th
  Conference on Learning Theory, COLT 2017, Amsterdam, The Netherlands, 7-10
  July 2017}, volume~65 of {\em Proceedings of Machine Learning Research},
  pages 1004--1042. PMLR, 2017.

\bibitem[GKLW18]{journals/corr/abs-1810-06793}
Rong Ge, Rohith Kuditipudi, Zhize Li, and Xiang Wang.
\newblock Learning two-layer neural networks with symmetric inputs.
\newblock {\em CoRR}, abs/1810.06793, 2018.

\bibitem[GKM18]{conf/icml/GoelKM18}
Surbhi Goel, Adam~R. Klivans, and Raghu Meka.
\newblock Learning one convolutional layer with overlapping patches.
\newblock In Jennifer~G. Dy and Andreas~Krause 0001, editors, {\em ICML},
  volume~80 of {\em JMLR Workshop and Conference Proceedings}, pages
  1778--1786. JMLR.org, 2018.

\bibitem[GLM17]{journals/corr/abs-1711-00501}
Rong Ge, Jason~D. Lee, and Tengyu Ma.
\newblock Learning one-hidden-layer neural networks with landscape design.
\newblock {\em CoRR}, abs/1711.00501, 2017.

\bibitem[Kea98]{journals/jacm/Kearns98}
Michael~J. Kearns.
\newblock Efficient noise-tolerant learning from statistical queries.
\newblock {\em J. ACM}, 45(6):983--1006, 1998.

\bibitem[KK14]{klivans2014embedding}
Adam Klivans and Pravesh Kothari.
\newblock Embedding hard learning problems into gaussian space.
\newblock In {\em Approximation, Randomization, and Combinatorial Optimization.
  Algorithms and Techniques (APPROX/RANDOM 2014)}. Schloss
  Dagstuhl-Leibniz-Zentrum fuer Informatik, 2014.

\bibitem[KKKS11]{KKKS11}
Sham~M. Kakade, Adam Kalai, Varun Kanade, and Ohad Shamir.
\newblock Efficient learning of generalized linear and single index models with
  isotonic regression.
\newblock In {\em NIPS}, pages 927--935, 2011.

\bibitem[KKMS08]{kalai2008agnostically}
Adam~Tauman Kalai, Adam~R Klivans, Yishay Mansour, and Rocco~A Servedio.
\newblock Agnostically learning halfspaces.
\newblock {\em SIAM Journal on Computing}, 37(6):1777--1805, 2008.

\bibitem[KS09]{KalaiS09}
Adam Kalai and Ravi Sastry.
\newblock The isotron algorithm: High-dimensional isotonic regression.
\newblock In {\em COLT}, 2009.

\bibitem[MR18a]{journals/corr/abs-1810-04207}
Pasin Manurangsi and Daniel Reichman.
\newblock The computational complexity of training relu(s).
\newblock {\em CoRR}, abs/1810.04207, 2018.

\bibitem[MR18b]{MR2018}
Pasin Manurangsi and Daniel Reichman.
\newblock The computational complexity of training relu(s).
\newblock {\em CoRR}, abs/1810.04207, 2018.

\bibitem[Sha18]{Shamir18}
Ohad Shamir.
\newblock Distribution-specific hardness of learning neural networks.
\newblock {\em Journal of Machine Learning Research}, 19:32:1--32:29, 2018.

\bibitem[Sol17]{soltanolkotabi2017learning}
Mahdi Soltanolkotabi.
\newblock Learning relus via gradient descent.
\newblock In {\em Advances in Neural Information Processing Systems}, pages
  2007--2017, 2017.

\bibitem[SSSS17]{shalev2017failures}
Shai Shalev-Shwartz, Ohad Shamir, and Shaked Shammah.
\newblock Failures of gradient-based deep learning.
\newblock In {\em International Conference on Machine Learning}, pages
  3067--3075, 2017.

\bibitem[Sz{\"o}09]{szorenyi2009characterizing}
Bal{\'a}zs Sz{\"o}r{\'e}nyi.
\newblock Characterizing statistical query learning: simplified notions and
  proofs.
\newblock In {\em International Conference on Algorithmic Learning Theory},
  pages 186--200. Springer, 2009.

\bibitem[Val15]{valiant2015finding}
Gregory Valiant.
\newblock Finding correlations in subquadratic time, with applications to
  learning parities and the closest pair problem.
\newblock {\em Journal of the ACM (JACM)}, 62(2):13, 2015.

\bibitem[Ver]{vershynin2017four}
Roman Vershynin.
\newblock Four lectures on probabilistic methods for data science.

\bibitem[VW18]{vempala2018gradient}
Santosh Vempala and John Wilmes.
\newblock Gradient descent for one-hidden-layer neural networks: Polynomial
  convergence and sq lower bounds.
\newblock {\em arXiv preprint arXiv:1805.02677}, 2018.

\bibitem[Zha18]{Zhang18}
Chicheng Zhang.
\newblock Efficient active learning of sparse halfspaces.
\newblock In S{\'e}bastien Bubeck, Vianney Perchet, and Philippe Rigollet,
  editors, {\em Conference On Learning Theory, COLT 2018, Stockholm, Sweden,
  6-9 July 2018}, volume~75 of {\em Proceedings of Machine Learning Research},
  pages 1856--1880. PMLR, 2018.

\bibitem[ZSJ{\etalchar{+}}17]{zhong2017recovery}
Kai Zhong, Zhao Song, Prateek Jain, Peter~L Bartlett, and Inderjit~S Dhillon.
\newblock Recovery guarantees for one-hidden-layer neural networks.
\newblock {\em arXiv preprint arXiv:1706.03175}, 2017.

\bibitem[ZYWG18]{journals/corr/abs-1806-07808}
Xiao Zhang, Yaodong Yu, Lingxiao Wang, and Quanquan Gu.
\newblock Learning one-hidden-layer relu networks via gradient descent.
\newblock {\em CoRR}, abs/1806.07808, 2018.

\end{thebibliography}
\appendix
\section{Useful Properties}
\begin{fact}[\cite{buet2015multinomial}]\label{fact:Hermite_multinomial}
  For $\beta_1, \ldots, \beta_k$ such that $\sum_{i=1}^k \beta_i^2 = 1$, we have
  \[
  H_n\left(\sum_{i = 1}^{k} \beta_i x_i\right) = \sum_{n_1 + \ldots + n_k = n}\frac{n!}{n_1! \cdots n_k!}\prod_{j=1}^k \beta_j^{n_j}H_{n_j}(x_j).
  \]
\end{fact}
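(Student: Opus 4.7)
The plan is to prove this multinomial-style identity via the generating function for the probabilists' Hermite polynomials, namely
\[
e^{tx - t^2/2} \;=\; \sum_{n \geq 0} \frac{t^n}{n!}\, H_n(x),
\]
and to exploit the normalization constraint $\sum_i \beta_i^2 = 1$ to factor a joint generating function across coordinates. The whole proof is a coefficient-extraction argument; there is nothing analytic to check because each generating function is an everywhere-convergent power series in $t$ for fixed $x_i$, so rearranging products of series by total degree is valid.

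First I would apply the generating function identity once with argument $x = \sum_{i=1}^k \beta_i x_i$ to obtain
\[
\sum_{n \geq 0} \frac{t^n}{n!}\, H_n\!\left(\sum_{i=1}^k \beta_i x_i\right) \;=\; \exp\!\left(t\sum_{i=1}^k \beta_i x_i - \tfrac{t^2}{2}\right).
\]
The key step is then to use the constraint $\sum_i \beta_i^2 = 1$ to write $t^2/2 = \sum_{i=1}^k (t\beta_i)^2/2$, which lets the exponential factor cleanly across coordinates:
\[
\exp\!\left(t\sum_{i=1}^k \beta_i x_i - \tfrac{t^2}{2}\right) \;=\; \prod_{i=1}^k \exp\!\left((t\beta_i) x_i - \tfrac{(t\beta_i)^2}{2}\right) \;=\; \prod_{i=1}^k \sum_{n_i \geq 0} \frac{(t\beta_i)^{n_i}}{n_i!}\, H_{n_i}(x_i),
\]
where the last equality applies the same generating function identity to each factor with argument $x_i$ and formal variable $t\beta_i$.

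Finally, I would expand the product of $k$ series by collecting all tuples $(n_1,\ldots,n_k)$ with prescribed total degree $n_1 + \cdots + n_k = n$, giving the coefficient of $t^n$ on the right as
\[
\sum_{n_1 + \cdots + n_k = n} \prod_{j=1}^k \frac{\beta_j^{n_j}}{n_j!}\, H_{n_j}(x_j).
\]
Equating with the coefficient of $t^n$ on the left, namely $H_n(\sum_i \beta_i x_i)/n!$, and multiplying through by $n!$ yields exactly the claimed identity. The only real ``obstacle'' — and it is a conceptual one rather than a technical one — is recognizing that the hypothesis $\sum_i \beta_i^2 = 1$ is precisely what is needed to kill the cross-terms $\beta_i\beta_j$ with $i\neq j$ in the quadratic part of the exponent, without which the exponential would not split as a product over coordinates and the identity would fail.
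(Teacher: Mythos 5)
Your proof is correct: the generating-function identity $e^{tx-t^2/2}=\sum_{n\ge 0}\frac{t^n}{n!}H_n(x)$ is the right tool, the hypothesis $\sum_i\beta_i^2=1$ is used exactly where it must be (to split the quadratic term $t^2/2=\sum_i(t\beta_i)^2/2$ so the exponential factors over coordinates), and the coefficient extraction is justified since each series is entire in $t$. The paper itself gives no proof of this fact, importing it by citation from \cite{buet2015multinomial}, and your argument is precisely the standard derivation underlying that reference, so it validly fills in the omitted proof.
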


\begin{lemma}\label{lem:reluexp}
For any $\vw$, we have
\[
\E_{\vz \sim \mathcal{N}(0, \rmI_{d})}\left[\relu(\vw \cdot \vz)^2\right] = \frac{\|\vw\|^2}{2} \text{ and } \E_{\vz \sim \mathcal{N}(0, \rmI_{d})}\left[\relu(\vw \cdot \vz)\right] = \frac{\|\vw\|}{\sqrt{2\pi}}.
\]
\end{lemma}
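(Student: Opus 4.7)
The plan is to reduce both identities to one-dimensional integrals against a scalar Gaussian, since only the projection $\vw \cdot \vz$ matters. Specifically, for $\vz \sim \mathcal{N}(0, \rmI_d)$ the inner product $\vw \cdot \vz$ is distributed as $\mathcal{N}(0, \|\vw\|^2)$, so letting $\sigma = \|\vw\|$ and $X \sim \mathcal{N}(0, \sigma^2)$, both quantities in the lemma become $\E[\relu(X)]$ and $\E[\relu(X)^2]$.

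For the second moment, I would exploit symmetry: since $\relu(X)^2 = X^2 \cdot \mathbbm{1}[X \ge 0]$ and the law of $X$ is symmetric about $0$, one has $\E[X^2 \mathbbm{1}[X \ge 0]] = \tfrac{1}{2}\E[X^2] = \sigma^2/2 = \|\vw\|^2/2$, which is exactly the claimed identity. This avoids any integration at all for the second statement.

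For the first moment, I would compute directly:
\[
\E[\relu(X)] \;=\; \int_0^\infty \frac{x}{\sigma\sqrt{2\pi}}\, e^{-x^2/(2\sigma^2)}\,dx.
\]
The substitution $u = x^2/(2\sigma^2)$ reduces this to $\frac{\sigma}{\sqrt{2\pi}} \int_0^\infty e^{-u}\,du = \sigma/\sqrt{2\pi} = \|\vw\|/\sqrt{2\pi}$, which matches the claim.

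There is no real obstacle here; the only thing to be careful about is the degenerate case $\vw = 0$, which is trivial since both sides vanish. All other cases follow from the one-dimensional reduction together with the symmetry argument and the single elementary integral above.
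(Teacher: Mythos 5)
Your proof is correct and follows essentially the same route as the paper: reduce to the scalar Gaussian $\vw\cdot\vz \sim \mathcal{N}(0,\|\vw\|^2)$, obtain the second moment as half the variance by symmetry, and evaluate the first moment by the elementary one-dimensional integral (the paper phrases this as half the mean of $|X|$, which is the same computation). No gaps; the explicit handling of $\vw=0$ is a minor tidiness the paper omits.
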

\begin{proof}
Observe that for $\vz \sim \mathcal{N}(0, \rmI_{d})$, $\vw \cdot \vz \sim \mathcal{N}(0, ||\vw||^2)$, thus we have
\begin{align*}
\E_{\vz \sim \mathcal{N}(0, \rmI_{d})}\left[\relu(\vw \cdot \vz)^2\right]&= \E_{g \sim \mathcal{N}(0, ||\vw||^2)}\left[\relu(g)^2\right]\\
&= \frac{1}{\sqrt{2 \pi}||\vw||} \int_{0}^\infty g^2 e^{- \frac{g^2}{2||\vw||^2}} dg\\
&= \frac{||\vw||^2}{2}
\end{align*}
The last follows from observing that the integral is $1/2$ of the variance of a $N(0, ||\vw||^2)$ variable. Similarly, we have
\begin{align*}
\E_{\vz \sim \mathcal{N}(0, \rmI_{d})}\left[\relu(\vw \cdot \vz)\right]&= \E_{g \sim \mathcal{N}(0, ||\vw||^2)}\left[\relu(g)\right]\\
&= \frac{1}{\sqrt{2 \pi}||\vw||} \int_{0}^\infty g e^{- \frac{g^2}{2||\vw||^2}} dg\\
&= \frac{||\vw||}{\sqrt{2\pi}}.
\end{align*}
Here the last equality follows from standard computation of mean of the absolute value of a Gaussian random variable.
\end{proof}

\section{Omitted Proofs}

\begin{proof}[Proof of Claim~\ref{lem:hrelu}]
  For $i \ge 2$,
  \begin{align}
  \hat{\relu}_i &= \frac{1}{\sqrt{2 \pi}}\int_{-\infty}^{\infty} \relu(x) H_i(x) e^{-\frac{x^2}{2}} dx\\
  &= \frac{1}{\sqrt{2 \pi i!}}\int_{0}^{\infty} xH_i(x) e^{-\frac{x^2}{2}} dx \\
  &= \frac{1}{\sqrt{2 \pi i!}}\int_{0}^{\infty} (H_{i + 1}(x) + iH_{i-1}(x)) e^{-\frac{x^2}{2}} dx \\
  &= \frac{1}{\sqrt{2 \pi i!}} (H_{i}(0) + iH_{i-2}(0))
  \end{align}
  Here we used the additional property on the recurrence of $H$, that is, $H_{n+1}(x) = xH_{n}(x) - nH_{n-1}$.
\end{proof}

\begin{proof}[Proof of Claim~\ref{claim:h_expansion_relu}] 
    Since $\sum_{i \in S} \left(\frac{1}{\sqrt{k}}\right)^2 = 1$ using Fact \ref{fact:Hermite_multinomial}, we have
    \begin{align}
    \relu\left(\frac{\sum_{i \in S} z_i}{\sqrt{k}}\right) &= \sum_{n=0}^{\infty} \frac{\hat{\relu}_n}{\sqrt{n!}} \cdot H_n\left(\frac{\sum_{i \in S} z_i}{\sqrt{k}}\right) \\
    &= \sum_{n=0}^{\infty} \frac{\hat{\relu}_n}{\sqrt{n!}} \cdot\left[ \sum_{n_1 + \ldots + n_k = n}\frac{n!}{n_1! \cdots n_k!}\prod_{j=1}^k \frac{1}{k^{n_j/2}} H_{n_j}(z_j) \right] \\
    &= \sum_{n=0}^{\infty} \frac{\hat{\relu}_n}{\sqrt{n!}} \cdot\left[ \frac{1}{k^{n/2}} \cdot \sum_{n_1 + \ldots + n_k = n}\frac{n!}{n_1! \cdots n_k!}\prod_{j=1}^k H_{n_j}(z_j) \right] \\
    &=\sum_{n=0}^{\infty} \frac{\hat{\relu}_n}{\sqrt{n!}} \cdot \left[ \frac{1}{k^{n/2}}\sum_{n_1 + \ldots + n_k = n}\frac{n!}{n_1! \cdots n_k!}\prod_{j=1}^k \sqrt{n_j!}\bar{H}_{n_j}(z_j) \right]\\
    &= \sum_{n=0}^{\infty} \frac{\hat{\relu}_n}{k^{n/2}}\cdot\sum_{n_1 + \ldots + n_k = n}\left(\frac{n!}{n_1! \cdots n_k!}\right)^{1/2}\prod_{j=1}^k\bar{H}_{n_j}(z_j)
    \end{align}
  \end{proof}

\begin{proof}[Proof of Claim~\ref{claim:inactive_index}]
Since $j \in S$, removing index $j$, $\E_{z_j}[\chi_S(\vz)|\vz_{-j}] = 0$. Thus, for the input, the label is a Bernoulli random variable with probability $1/2$. Thus we have,
\begin{align}
&\err_{D_j}(\relu_\vw)\\
&= \E_{\vz \sim \mathcal{N}(0, \rmI_d)}[(\relu(\vw \cdot \vz_{-j}) - y^\prime)^2]\\
& = \frac{1}{2}\left(\E_{\vz_{-j} \sim \mathcal{N}(0, \rmI_{d-1})}\left[\left(\relu(\vw \cdot \vz_{-j}) - 1\right)^2\right] + \E_{\vz_{-j} \sim \mathcal{N}(0, \rmI_{d-1})}\left[\left(\relu(\vw \cdot \vz_{-j})\right)^2\right]\right)\\
&= \E_{\vz_{-j} \sim \mathcal{N}(0, \rmI_{d-1})}\left[\relu(\vw \cdot \vz_{-j})^2\right] - \E_{\vz_{-j} \sim \mathcal{N}(0, \rmI_{d-1})}\left[\relu(\vw \cdot \vz_{-j})\right] + \frac{1}{2}\\
&= \frac{||\vw||^2}{2} - \frac{||\vw||}{\sqrt{2\pi}} + \frac{1}{2}
\end{align}
Here the third equality follows since $j \in S$ and not in $\vz_{-j}$ therefore, the label is random for the ReLU. The last equality follows from Lemma~\ref{lem:reluexp}. Note that, for any ReLU, the minimum error is achieved when $||\vw|| = \frac{1}{\sqrt{2\pi}}$. Thus when $j \not \in S$ the best ReLU achieves error at least $\frac{1}{2} - \frac{1}{4\pi}$.
  \end{proof}

\begin{proof}[Proof of Claim~\ref{claim:active_index}]
Since $j$ is not a relevant variable $S \subseteq [d] \setminus \{ j \}$, from Theorem \ref{thm:relu_correlated}, we know that there exists $\relu_{\vw_S}$ with $||\vw_S|| = 1/\sqrt{2\pi}$ dependent only on variables in $S$ correlated with $\chi^\gamma_S$,
  \begin{align}
  &\err_{D_j}(\relu_{\vw_S})\\
    &= \E_{\vz \sim \mathcal{N}(0, \rmI_{d})}[(\relu(\vw_S \cdot \vz) - y^\prime)^2]\\
  &= (1- \eta) \E_{\vz \sim \mathcal{N}(0, \rmI_d)}\left[\left(\relu(\vw_S \cdot \vz) - \frac{\prod_{i \in S}\sgn(z_i) + 1}{2}\right)^2\right] \\
    &\qquad + \eta \E_{\vz \sim \mathcal{N}(0, \rmI_d)}\left[\left(\relu(\vw_S \cdot \vz) - \frac{1 - \prod_{i \in S}\sgn(z_i)}{2}\right)^2\right]\\
  &= \E_{\vz \sim \mathcal{N}(0, \rmI_{d})}\left[\relu(\vw_S \cdot \vz_S)^2\right] - (1 - 2\eta)\E_{\vz \sim \mathcal{N}(0, \rmI_{d})}\left[\relu(\vw_S \cdot \vz)\prod_{i \in S}\sgn(z_i)\right]\\
  &\quad \quad - \E_{\vz \sim \mathcal{N}(0, \rmI)}\left[\relu(\vw_S \cdot \vz)\right]  +  \frac{1}{2}\E_{\vz \sim \mathcal{N}(0, \rmI_{d})}\left[\prod_{i \in S}\sgn(z_i)^2\right]\\
  &= \frac{||\vw_S||^2}{2} -\frac{||\vw_S||}{\sqrt{2 \pi}} + \frac{1}{2} - (1 - 2\eta)\E_{\vz \sim \mathcal{N}(0, \rmI_{d})}\left[\relu(\vw_S \cdot \vz)\prod_{i \in S}\sgn(z_i)\right]\\
  &\ge \frac{||\vw^*||^2}{2} -\frac{||\vw^*||}{\sqrt{2 \pi}} + \frac{1}{2} - \frac{2^{-O(k)}}{1 - 2 \eta} = \frac{1}{2} - \frac{1}{4\pi} - \frac{2^{-O(k)}}{1 - 2 \eta}
  \end{align}
\end{proof}

\end{document}